\crefname{section}{section}{sections}
\crefname{subsection}{subsection}{subsections}
\Crefname{section}{Section}{Sections}
\Crefname{subsection}{Subsection}{Subsections}
\Crefname{figure}{Figure}{Figures}
\newtheorem{theorem}{Theorem}[section]
\newtheorem{proposition}[theorem]{Proposition}
\newtheorem{definition}[theorem]{Definition}
\definecolor{codegreen}{rgb}{0,0.6,0}
\definecolor{codegray}{rgb}{0.5,0.5,0.5}
\definecolor{codepurple}{rgb}{0.58,0,0.82}
\definecolor{backcolour}{rgb}{0.95,0.95,0.92}
\lstdefinestyle{mystyle}{
  backgroundcolor=\color{backcolour}, commentstyle=\color{codegreen},
  keywordstyle=\color{magenta},
  numberstyle=\tiny\color{codegray},
  stringstyle=\color{codepurple},
  basicstyle=\ttfamily\scriptsize,
  breakatwhitespace=false,
  breaklines=true,
  captionpos=b,
  keepspaces=true,
  numbers=left,
  numbersep=5pt,
  showspaces=false,
  showstringspaces=false,
  showtabs=false,
  tabsize=2
}
\DeclareMathOperator{\tr}{tr}
\title{Exploring Variance Reduction in Importance Sampling for Efficient DNN Training}
\author[$\ast$]{Takuro Kutsuna}
\affil[$\ast$]{\normalsize Toyota Central R\&D Labs., Inc.}
\date{}
\begin{document}
\maketitle
\noindent\textbf{Note.} This is the author’s accepted manuscript.
The final version is published in \textit{SIAM Journal on Mathematics of Data Science}
and is available via \texttt{https://doi.org/10.1137/25M1726339}.
\begin{abstract}
    Importance sampling is widely used to improve the efficiency of deep neural network (DNN) training by reducing the variance of gradient estimators.
    However, efficiently assessing the variance reduction relative to uniform sampling remains challenging due to computational overhead.
    This paper proposes a method for estimating variance reduction during DNN training using only minibatches sampled under importance sampling.
    By leveraging the proposed method, the paper also proposes an \textit{effective minibatch size} to enable automatic learning rate adjustment. An absolute metric to quantify the efficiency of importance sampling is also introduced as well as an algorithm for real-time estimation of importance scores based on moving gradient statistics. Theoretical analysis and experiments on benchmark datasets demonstrated that the proposed algorithm consistently reduces variance, improves training efficiency, and enhances model accuracy compared with current importance-sampling approaches while maintaining minimal computational overhead.
\end{abstract}

\section{Introduction} \label{sec:introduction}
Importance sampling is widely used in various research areas for two primary purposes.
First, it is used to estimate the expectation of a function~$f(x)$ of a random variable~$x$ under the target distribution~$p(x)$, where direct sampling from~$p(x)$ is challenging. Instead, an alternative distribution~$q(x)$, from which sampling is feasible, is used.
Applications include Bayesian posterior estimation \cite{neal2001annealed} and particle filters \cite{Liu2001}. Second, it aims to reduce the variance of the expectation estimator of~$f(x)$ by carefully selecting the alternative distribution~$q(x)$. Examples include rare event estimation \cite{JUNEJA2006291} and reliability analysis \cite{AU1999135}.

Importance sampling has also been applied to improve the efficiency of deep neural network (DNN) training \cite{alain2016variance,katharopoulos2018not,johnson2018training,katharopoulos2017biased,chang2017active}, which falls under the second purpose of variance reduction. In standard stochastic gradient descent (SGD) training, data samples are uniformly drawn from the training dataset to form minibatches.
In contrast, importance sampling-based approaches for DNN training estimate the importance score of each training sample using specific criteria during training, define an alternative sampling distribution proportional to these importance scores, and construct minibatches by sampling in accordance with this distribution.
To ensure unbiased estimation of the expected training loss, importance weights are applied to each sample.
Accurately estimating the importance score of each sample enables importance sampling to reduce the variance of the estimator for the expected loss (or its gradient) compared with uniform sampling in SGD.

The effectiveness of variance reduction achieved through importance sampling depends on the choice of the alternative distribution, i.e., the estimated importance score of each training sample in DNN-training applications; however, quantifying this effectiveness is challenging. While the variance-reduction rate can be evaluated by concurrently sampling from both the uniform distribution and alternative distribution to compute the variance of the loss for each case, this approach introduces significant computational overhead, making it impractical in most scenarios.
Although metrics, such as the effective sample size (ESS) \cite{kong1992note, liu1996metropolized} and its variants \cite{MARTINO2017386}, have been proposed as proxies for the effectiveness of importance sampling, they are insufficient for assessing variance-reduction efficiency in DNN training for the following reasons. While the primary objective of importance sampling in DNN training is to reduce the variance of the loss estimation compared with uniform sampling, it is implicitly assumed with ESS that importance sampling cannot achieve a variance smaller than that of uniform sampling \cite{MARTINO2017386}. However, this assumption does not necessarily hold in the context of DNN training with importance sampling.

To address this issue, this paper proposes a method for estimating the variance reduction achieved by importance sampling relative to uniform sampling during DNN training. It also estimates the lower bound of variance reduction achievable with the theoretically optimal alternative distribution. A key advantage of this method is that all estimations are executed using only minibatches sampled from the alternative distribution. Thus, the computational overhead remains minimal.

On the basis of the proposed method, this paper also proposes an effective minibatch size (EMS) for automatic learning-rate adjustment.
We derive the EMS~$N_\mathrm{ems}$ for importance sampling with a minibatch size of~$N$ such that uniform sampling with a minibatch size of~$N_\mathrm{ems}$ achieves the same variance as that obtained with importance sampling.
By leveraging the relationship between minibatch size and optimal learning rate~\cite{smith2018bayesian,smith2018don,li2024surge,mccandlish2018empirical,NEURIPS2024_ef74413c}, we derive an automatic learning-rate adjustment based on the EMS.
We also introduce an absolute metric to evaluate the efficiency of importance sampling and design an importance-score-estimation algorithm:
The metric assigns a value of~$0$ to indicate the theoretically optimal case and~$1$ to represent equivalence with uniform sampling. With this metric, the algorithm estimates the importance score of each data point during training by using the moving statistics of per-sample loss gradients. Specifically, the hyperparameter for the moving statistics, which determines the weight assigned to past observations, is designed on the basis of the proposed efficiency metric.

In summary, the contributions of this study are as follows:
\begin{itemize}
    \item We propose a method for estimating the variance reduction of the loss-gradient estimator in DNN training with importance sampling compared with uniform sampling, while maintaining minimal computational overhead (\cref{sec:variance_estimator}).
    \item We propose an EMS on the basis of the proposed method and apply it for automatic learning-rate adjustment (\cref{sec:ems}).
    \item We introduce a metric to evaluate the efficiency of importance sampling and designed an algorithm to estimate importance scores using moving statistics (\cref{sec:ema-is}).
    \item Experimental results on benchmark datasets indicate the superiority of the proposed method over current importance-sampling methods for DNN training (\cref{sec:experiment}).
\end{itemize}

\section{Related work} \label{sec:related_work}
\paragraph{Importance sampling for DNN training}
Several studies investigated the use of importance sampling to improve the efficiency of DNN training. Alain et al. \cite{alain2016variance} proposed a method for distributed training with importance sampling. With this method, the gradient norm of each training sample is computed across multiple computation nodes, aggregated on a master node, and used to carry out training with importance sampling. Although they discussed the variance reduction achieved through importance sampling, they did not explore efficient estimation methods or applications such as learning-rate adjustment, as proposed in this paper.
Johnson and Guestrin \cite{johnson2018training} investigated accelerating SGD training by analyzing the convergence speed of SGD to the optimal solution, leading to the development of importance sampling based on per-sample gradient norms. They proposed a robust regression model to estimate the importance score of each training sample. They also introduced a method for automatically adjusting the learning rate based on the estimated ``expected squared norm of the gradient.'' This method differs from the EMS-based learning-rate adjustment we derived.
Similarly, Katharopoulos and Fleuret \cite{katharopoulos2018not} proposed an importance-sampling method for accelerating the convergence speed of SGD to the optimal solution. They introduced a two-step approach for constructing minibatches with importance sampling. Importance scores are first calculated for a large initial minibatch (larger than the final minibatch size), then, a subsampling step is carried out on the basis of the computed importance scores to generate the final minibatch.
Katharopoulos and Fleuret \cite{katharopoulos2017biased} proposed using per-sample loss values as importance scores. To speed up the estimation of these loss values, they suggested training a smaller auxiliary model for loss estimation alongside the primary model.
Chang et al.~\cite{chang2017active} proposed a method for maintaining a history of model predictions during training and estimating per-sample importance scores using either the average or variance of these predictions.

Several studies explored the effectiveness of importance sampling in convex optimization problems \cite{pmlr-v37-zhaoa15,NIPS2014_f29c21d4,stich2017safe}. However, since DNN training involves non-convex problems, these methods are challenging to apply directly.

\paragraph{Other training approaches with non-uniform sampling}
Curriculum learning \cite{bengio2009curriculum,soviany2022curriculum} is another approach that uses non-uniform sampling of training data. The core idea is to reorder the training dataset on the basis of data difficulty, prioritizing easier samples during the early stages of training and gradually introducing harder samples as training progresses. This strategy aims to enhance the generalization performance of the final model.
In contrast, importance sampling executes training that is theoretically equivalent to uniform sampling but with reduced variance, as the estimated expected loss or gradient remains unbiased.

\paragraph{Complementary SGD improvement strategies}
Besides importance-sampling-based variance reduction, SGD has been improved through various approaches, ranging from optimizer design (e.g., Adam \cite{kingma2015adam}) to alternative variance-reduction techniques. The latter include control-variate-based methods such as SVRG \cite{johnson2013accelerating}, which reduce gradient variance by periodically computing full-batch gradients or maintaining historical gradient tables. While effective in convex or small-scale settings, these methods often face practical challenges in deep learning due to the breakdown of underlying assumptions \cite{defazio2019ineffectiveness}.
Our approach is orthogonal to both lines of research. As demonstrated in our experiment in \cref{sec:exp_adam}, it can be combined with the adaptive optimizer Adam by appropriately adjusting the learning rate, as discussed in \cref{sec:epsilon_ems}.

\paragraph{Effeciency measure in importance sampling}
ESS \cite{kong1992note,liu1996metropolized} was proposed as a metric to evaluate the efficiency of importance sampling and is widely used in applications such as detecting degeneracy in particle filters and sequential Monte Carlo methods \cite{Liu2001}. It is estimated on the basis of the ratio of the variance of an estimator~$\hat{\mu}$ under the target distribution~$\pi$, $\mathrm{Var}_\pi[\hat{\mu}]$, to the variance of an estimator~$\tilde{\mu}$ under the proposal distribution~$q$, $\mathrm{Var}_q[\tilde{\mu}]$.\footnote{The notations follow those in a previous study \cite{MARTINO2017386}.} This concept is closely related to the EMS we propose.
However, due to assumptions and approximations in its derivation, the property $\mathrm{Var}_\pi[\hat{\mu}] \leq \mathrm{Var}_q[\tilde{\mu}]$ is implicitly assumed with ESS \cite{MARTINO2017386}. The same limitation applies to its extension, generalized-ESS \cite{MARTINO2017386}. While this property does not pose significant issues for applications such as detecting degeneracy, it becomes a critical constraint when applying importance sampling to DNN training. This is because the primary objective of importance sampling in DNN training is to achieve lower variance of the estimator compared with uniform sampling, which serves as the target distribution.
In contrast, our EMS does not impose such constraints, making it suitable for applications of importance sampling in DNN training.

\section{Preliminary} \label{sec:preliminary}
In this section, we summarize key aspects of importance sampling for DNN training that are relevant to our research.

\subsection{Notations} \label{sec:notation}
Let the training dataset, consisting of~$M$ pairs of input~$x$ and ground truth label~$y$, be represented as~$\mathcal{D}_\mathrm{train} = \left\{\left(x^{(i)}, y^{(i)}\right)\right\}_{i=1}^M$. Each pair~$\left(x^{(i)}, y^{(i)}\right)$ is assumed to be independent and identically distributed (i.i.d.) in accordance with the distribution~$p(x, y)$.
Let~$f_\theta$ denote a DNN model to be trained, where~$\theta = \left(\theta_k\right)_{k=1}^K$ represents the model parameters. The training loss function for the $i$-th data point~$\left(x^{(i)}, y^{(i)}\right)$ is defined as~$\mathcal{L}(\theta; i) := \ell\left(f_\theta\left(x^{(i)}\right), y^{(i)}\right)$, where~$\ell$ is assumed to be a continuously differentiable function, such as the cross-entropy loss.
%
The gradient of the loss function for the $i$-th data point with respect to~$\theta$ is denoted as~$\nabla_{\theta}\mathcal{L}(\theta; i)$. Since~$\theta$ represents a vector,~$\nabla_{\theta}\mathcal{L}(\theta; i)$ is also a vector. Specifically, the gradient with respect to the $k$-th parameter~$\theta_k$ is represented as~$\nabla_{\theta_k}\mathcal{L}(\theta; i) := \frac{\partial \mathcal{L}(\theta;i)}{\partial \theta_k}$, which is a scalar.
Additionally, for a natural number~$a$, let~$\llbracket a \rrbracket := \left\{1, \ldots, a\right\}$.
The expectation and variance (or covariance matrix) of a random variable~$z$ following the distribution~$p(z)$ are denoted as~$\mathbb{E}_{z \sim p(z)}\left[\cdot\right]$ and~$\mathbb{V}_{z \sim p(z)}\left[\cdot\right]$, respectively. Note that these become a vector and a matrix, respectively, when~$z$ is a vector.
Let~$\mathbb{R}^M_{>0}$ denote the $M$-dimensional space in which all elements are strictly positive.
Let~$\|\cdot\|$ denote the L2 norm of a vector.

\subsection{DNN training based on uniform sampling}
At each training step, we would like to know the true gradient~$G(\theta) := \mathbb{E}_{(x,y)\sim p(x,y)} \left[\nabla_\theta \ell\left(f_\theta\left(x\right), y\right)\right]$, which we don't have access to.
The goal of any gradient estimator is to approximate~$G(\theta)$ as well as possible with (a subset of) the training data.
In DNN training based on uniform sampling, the gradient of the loss is estimated by~$\nabla_{\theta}\mathcal{L}(\theta; i)$ with $i \sim p_\mathrm{unif}(i)$, where~$p_\mathrm{unif}(i)$ denotes a uniform distribution defined over~$\llbracket M \rrbracket$, the index set of~$\mathcal{D}_\mathrm{train}$, i.e.,~$p_\mathrm{unif}(i) = 1/M \ \left(\forall i \in \llbracket M \rrbracket\right)$.
We consider the following expectation:
\begin{align}
    \mathbb{E}_\mathrm{unif}\left[\nabla_\theta \mathcal{L}(\theta)\right] := \mathbb{E}_{i \sim p_\mathrm{unif}(i)}\left[\nabla_\theta\mathcal{L}(\theta; i)\right], \notag 
\end{align}
where~$\mathcal{L}(\theta)$ in the left hand side is used as a shorthand for the per-sample loss~$\mathcal{L}(\theta; i)$, and $\mathbb{E}_\mathrm{unif}\left[\nabla_\theta \mathcal{L}(\theta)\right]$ represents the expectation of the per-sample gradient under uniform sampling.
From the assumption of each~$(x^{(i)}, y^{(i)}) \in \mathcal{D}_\mathrm{train}$ being i.i.d. with respect to~$p(x,y)$, we have~$G(\theta) \approx \mathbb{E}_\mathrm{unif}\left[\nabla_\theta \mathcal{L}(\theta)\right]$.
In standard SGD training, we approximate~$\mathbb{E}_{\mathrm{unif}}\left[\nabla_\theta \mathcal{L}(\theta)\right]$ by computing the average gradient over a minibatch sampled according to~$p_\mathrm{unif}(i)$.\footnote{In SGD, sampling is often carried out
    without replacement (no resampling) within each epoch. However, for ease of
    comparison with importance sampling, we assume uniform sampling *with*
    replacement. For example, Chang et~al.\ \cite{chang2017active} refer to the
    former as SGD-Scan and the latter as SGD-Uni. Both methods were compared in
    our experiments.}
This minibatch gradient is an unbiased estimate of~$G(\theta)$.

\subsection{Training with importance sampling} \label{sec:is_training}
Let~$W \in \mathbb{R}^M_{>0}$ denote the vector of importance weights, where the $i$-th element~$w_i$ indicates the importance of $(x^{(i)}, y^{(i)}) \in \mathcal{D}_\mathrm{train}$.
In DNN training with importance sampling based on~$W$, the loss gradient is estimated as~$r(i;W)\,\nabla_{\theta}\mathcal{L}(\theta; i)$, where the index~$i$ is drawn from the distribution~$p_\mathrm{is}(i;W)$ defined over $\llbracket M \rrbracket$ by
\begin{align}
    p_\mathrm{is}(i;W) := \frac{w_i}{\sum_{i' \in \llbracket M \rrbracket} w_{i'}}.  \label{eq:prob_is}
\end{align}
This indicates~$p_\mathrm{is}(i;W) \propto w_i$.
The corresponding importance-sampling coefficient~$r(i;W)$ is given by
\begin{align}
    r(i;W) := \frac{p_\mathrm{unif}(i)}{p_\mathrm{is}(i;W)}.
\end{align}
We then define the weighted expectation as
\begin{align}
    \mathbb{E}_{\mathrm{is}(W)}\left[\nabla_\theta\mathcal{L}(\theta)\right] & := \mathbb{E}_{i \sim p_\mathrm{is}(i;W)}\left[r(i;W) \nabla_\theta\mathcal{L}(\theta; i)\right].  \label{eq:is_grad}
\end{align}
As shown in the next section,~$\mathbb{E}_\mathrm{unif}\left[\nabla_\theta\mathcal{L}(\theta)\right] = \mathbb{E}_{\mathrm{is}(W)}\left[\nabla_\theta\mathcal{L}(\theta)\right]$ holds.
Therefore, estimating~$\mathbb{E}_{\mathrm{is}(W)}\left[\nabla_\theta\mathcal{L}(\theta)\right]$ using a minibatch sampled from~$p_\mathrm{is}(i;W)$ provides an unbiased estimate of~$G(\theta)$.


\subsection{Expectation Equivalence}
The following relationship holds between the expected loss gradient under uniform sampling and that under importance sampling.
\begin{proposition} \label{prop:exp_equiv}
    For any~$W \in \mathbb{R}^M_{>0}$, it holds that
    \begin{align}
        \mathbb{E}_\mathrm{unif}\left[\nabla_\theta\mathcal{L}(\theta)\right] = \mathbb{E}_{\mathrm{is}(W)}\left[\nabla_\theta\mathcal{L}(\theta)\right]. \label{eq:grad_mean_eq}
    \end{align}
\end{proposition}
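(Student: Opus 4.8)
The plan is to evaluate the right-hand side of \cref{eq:grad_mean_eq} directly by unfolding the definition of the weighted expectation in \cref{eq:is_grad} as a finite sum over the index set~$\llbracket M \rrbracket$. First I would write
\begin{align}
    \mathbb{E}_{\mathrm{is}(W)}\left[\nabla_\theta\mathcal{L}(\theta)\right]
    = \sum_{i \in \llbracket M \rrbracket} p_\mathrm{is}(i;W)\, r(i;W)\, \nabla_\theta\mathcal{L}(\theta; i), \notag
\end{align}
which is legitimate because the expectation is over the discrete distribution~$p_\mathrm{is}(\cdot;W)$ on the finite set~$\llbracket M \rrbracket$, and the identity is to be read componentwise for each coordinate~$\theta_k$ of the gradient vector.

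Next I would substitute the definition~$r(i;W) = p_\mathrm{unif}(i)/p_\mathrm{is}(i;W)$ and cancel the factor~$p_\mathrm{is}(i;W)$ in each summand, leaving~$\sum_{i} p_\mathrm{unif}(i)\, \nabla_\theta\mathcal{L}(\theta; i)$, which is exactly~$\mathbb{E}_\mathrm{unif}\left[\nabla_\theta\mathcal{L}(\theta)\right]$ by definition. The one point that must be checked — and this is really the only thing resembling an obstacle — is that the cancellation is valid, i.e., that~$p_\mathrm{is}(i;W) > 0$ for every~$i \in \llbracket M \rrbracket$ so that~$r(i;W)$ is well-defined and no term is lost. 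This is guaranteed precisely by the hypothesis~$W \in \mathbb{R}^M_{>0}$: since every~$w_i > 0$, the denominator~$\sum_{i'} w_{i'}$ is positive and finite, so~$p_\mathrm{is}(i;W) = w_i / \sum_{i'} w_{i'} \in (0,1]$ for all~$i$, and the support of~$p_\mathrm{is}(\cdot;W)$ coincides with that of~$p_\mathrm{unif}$.

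Since no integrability or interchange-of-limit issues arise in the finite-sum setting, the argument is complete after these two steps; I would simply close by noting that the same chain of equalities holds coordinatewise, hence as an identity of vectors in~$\mathbb{R}^K$. I expect the entire proof to be three or four lines of display math with a one-sentence remark on the positivity condition.
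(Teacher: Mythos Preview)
Your proposal is correct and follows essentially the same approach as the paper: both arguments reduce to writing the expectation as a finite sum over~$\llbracket M \rrbracket$ and cancelling~$p_\mathrm{is}(i;W)$ against the factor in~$r(i;W)$, with the positivity of~$W$ ensuring the cancellation is valid. The only cosmetic differences are that the paper runs the chain of equalities in the opposite direction (starting from~$\mathbb{E}_{i \sim p_\mathrm{unif}(i)}[g(i)]$ and inserting~$p_\mathrm{is}(i;W)/p_\mathrm{is}(i;W)$) and states the identity for a generic function~$g(i)$ before specializing to~$\nabla_\theta\mathcal{L}(\theta;i)$.
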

\begin{proof}
    See \cref{apdx:exp_equiv}.
\end{proof}
This proposition indicates that the expected loss gradient remains the same under both uniform sampling and importance sampling, regardless of~$W$.

\subsection{Comparison of variance} \label{sec:is_var}
The covariance matrices of the loss gradient for uniform sampling and importance sampling are defined as
\begin{align}
    \mathbb{V}_\mathrm{unif}\left[\nabla_\theta\mathcal{L}(\theta)\right]    & := \mathbb{V}_{i \sim p_\mathrm{unif}(i)}\left[\nabla_\theta\mathcal{L}(\theta; i)\right],   \notag      \\
    \mathbb{V}_{\mathrm{is}(W)}\left[\nabla_\theta\mathcal{L}(\theta)\right] & := \mathbb{V}_{i \sim p_\mathrm{is}(i;W)}\left[r(i;W)\nabla_\theta\mathcal{L}(\theta; i)\right].  \notag
\end{align}
Although the expectations of the loss gradient for uniform sampling and importance sampling are identical regardless of~$W$, as discussed above, their variances may vary depending on~$W$.

Following Alain et al. \cite{alain2016variance}, we consider the trace of the covariance matrices:
\begin{align}
    \tr\left(\mathbb{V}_\mathrm{unif}\left[\nabla_\theta \mathcal{L}(\theta)\right]\right)    & =\sum_{k=1}^K \mathbb{V}_\mathrm{unif}\left[\nabla_{\theta_k} \mathcal{L}(\theta)\right],  \notag    \\
    \tr\left(\mathbb{V}_{\mathrm{is}(W)}\left[\nabla_\theta \mathcal{L}(\theta)\right]\right) & =\sum_{k=1}^K \mathbb{V}_{\mathrm{is}(W)}\left[\nabla_{\theta_k} \mathcal{L}(\theta)\right].  \notag
\end{align}
This corresponds to summing the variances of the gradients for each~$\theta_k$, while ignoring the covariances between parameters.
The following result is known regarding the optimal importance sampling weight~\cite{alain2016variance}.
\begin{proposition}[Optimal importance sampling weight \cite{alain2016variance}]
    The importance sampling weight $W^\ast$ that minimizes the trace of the gradient variance $\tr\left(\mathbb{V}_{\mathrm{is}(W)}\left[\nabla_\theta \mathcal{L}(\theta)\right]\right)$ is given by
    \begin{align}
        W^\ast := \left\{w_i \ \middle| \ w_i = p_\mathrm{unif}(i) \left\|\nabla_\theta \mathcal{L}(\theta;i)\right\|, \ i\in\llbracket M \rrbracket \right\}, \label{eq:W_ast}
    \end{align}
    and the corresponding trace value at~$W^\ast$ is
    \begin{align}
        \tr\left(\mathbb{V}_{\mathrm{is}(W^\ast)}\left[\nabla_\theta \mathcal{L}(\theta)\right]\right) = \left(\mathbb{E}_{i \sim p_\mathrm{unif}(i)}\left[\left\|\nabla_\theta \mathcal{L}(\theta;i)\right\|\right]\right)^2 - \left\|\mathbb{E}_{i \sim p_\mathrm{unif}(i)}\left[\nabla_\theta \mathcal{L}(\theta;i)\right]\right\|^2. \label{eq:tr_V_is_ast}
    \end{align}
\end{proposition}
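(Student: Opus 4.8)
The plan is to reduce the minimization over $W \in \mathbb{R}^M_{>0}$ to a one–dimensional optimization over probability vectors and then invoke Cauchy–Schwarz. First I would record the elementary identity that for any random vector $v$, $\tr\left(\mathbb{V}[v]\right) = \mathbb{E}\left[\|v\|^2\right] - \|\mathbb{E}[v]\|^2$; applying it to $v = r(i;W)\nabla_\theta\mathcal{L}(\theta;i)$ with $i \sim p_\mathrm{is}(i;W)$ gives
\begin{align}
    \tr\left(\mathbb{V}_{\mathrm{is}(W)}\left[\nabla_\theta\mathcal{L}(\theta)\right]\right) = \mathbb{E}_{i \sim p_\mathrm{is}(i;W)}\left[r(i;W)^2\left\|\nabla_\theta\mathcal{L}(\theta;i)\right\|^2\right] - \left\|\mathbb{E}_{\mathrm{is}(W)}\left[\nabla_\theta\mathcal{L}(\theta)\right]\right\|^2. \notag
\end{align}
By \cref{prop:exp_equiv} the second term equals $\left\|\mathbb{E}_\mathrm{unif}\left[\nabla_\theta\mathcal{L}(\theta)\right]\right\|^2$, which is independent of~$W$. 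Hence minimizing the trace is equivalent to minimizing the second–moment term alone.

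Next I would expand that term using $r(i;W) = p_\mathrm{unif}(i)/p_\mathrm{is}(i;W)$ to obtain
\begin{align}
    \mathbb{E}_{i \sim p_\mathrm{is}(i;W)}\left[r(i;W)^2\left\|\nabla_\theta\mathcal{L}(\theta;i)\right\|^2\right] = \sum_{i \in \llbracket M \rrbracket} \frac{p_\mathrm{unif}(i)^2\left\|\nabla_\theta\mathcal{L}(\theta;i)\right\|^2}{p_\mathrm{is}(i;W)}. \notag
\end{align}
Writing $a_i := p_\mathrm{unif}(i)\left\|\nabla_\theta\mathcal{L}(\theta;i)\right\| \ge 0$ and $q_i := p_\mathrm{is}(i;W)$, the task becomes: minimize $\sum_i a_i^2/q_i$ over~$q$ in the open probability simplex. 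Since the map $W \mapsto p_\mathrm{is}(\cdot;W)$ is surjective onto that simplex (any $q$ is realized by $w_i = q_i$, and positive rescaling of $W$ leaves $p_\mathrm{is}$ unchanged), it suffices to optimize over $q$ directly.

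The key step is the Cauchy–Schwarz inequality $\left(\sum_i a_i\right)^2 = \left(\sum_i \tfrac{a_i}{\sqrt{q_i}}\sqrt{q_i}\right)^2 \le \left(\sum_i a_i^2/q_i\right)\left(\sum_i q_i\right) = \sum_i a_i^2/q_i$, with equality iff $a_i/\sqrt{q_i} \propto \sqrt{q_i}$, i.e.\ $q_i \propto a_i$. Thus the minimum of the second–moment term is $\left(\sum_i a_i\right)^2 = \left(\mathbb{E}_{i \sim p_\mathrm{unif}(i)}\left[\left\|\nabla_\theta\mathcal{L}(\theta;i)\right\|\right]\right)^2$, attained when $p_\mathrm{is}(i;W) \propto p_\mathrm{unif}(i)\left\|\nabla_\theta\mathcal{L}(\theta;i)\right\|$; because $p_\mathrm{is}(i;W)\propto w_i$, this is precisely $W^\ast$ as in \eqref{eq:W_ast}. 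Subtracting the constant $\left\|\mathbb{E}_\mathrm{unif}\left[\nabla_\theta\mathcal{L}(\theta)\right]\right\|^2 = \left\|\mathbb{E}_{i \sim p_\mathrm{unif}(i)}\left[\nabla_\theta\mathcal{L}(\theta;i)\right]\right\|^2$ then yields \eqref{eq:tr_V_is_ast}.

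I do not anticipate a serious obstacle; the only delicate point is the degenerate case where $\left\|\nabla_\theta\mathcal{L}(\theta;i)\right\| = 0$ for some~$i$, in which the optimal $q_i$ is $0$, so the infimum is approached but not attained inside $\mathbb{R}^M_{>0}$. I would dispose of this either by assuming all per-sample gradient norms are strictly positive (the generic situation away from a stationary point) or by reading $W^\ast$ as a limiting weight; in both readings the stated optimal value is unaffected, since any term with $a_i = 0$ contributes $0$ for every $q_i > 0$. A secondary check worth stating explicitly is the vector-variance identity $\tr(\mathbb{V}[v]) = \mathbb{E}[\|v\|^2] - \|\mathbb{E}[v]\|^2$, which follows by summing the scalar identity $\mathbb{V}[v_k] = \mathbb{E}[v_k^2] - \mathbb{E}[v_k]^2$ over the components~$k \in \llbracket K \rrbracket$.
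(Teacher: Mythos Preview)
Your argument is correct and is the standard derivation: decompose the trace into a second-moment term minus a $W$-independent squared mean, then minimize $\sum_i a_i^2/q_i$ over the simplex via Cauchy--Schwarz, identifying the equality case $q_i \propto a_i$. Your handling of the degenerate case $\|\nabla_\theta\mathcal{L}(\theta;i)\|=0$ is also appropriate.

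Note, however, that the paper does not actually supply its own proof of this proposition: it is stated with a citation to Alain et al.\ \cite{alain2016variance} and treated as a known result, so there is nothing in the paper to compare your argument against. Your proof is essentially the one that appears in that reference (and in the broader importance-sampling literature), so it is fully aligned with what the paper relies on.
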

Considering that~$p_\mathrm{unif}(i) = 1/M$ and is independent of~$i$, it follows from \cref{eq:W_ast} that minimizing the trace of the covariance matrix of the loss gradient can be achieved by performing importance sampling with weights proportional to the L2 norm of the loss gradient~$\left\|\nabla_\theta \mathcal{L}(\theta; i)\right\|$ for each data point.
If the L2 norm of the loss gradient serves as a measure of the degree to which a data point~$i$ affects the training, prioritizing the sampling of data points with larger effects reduces the variance of the loss gradient.

\section{Variance estimation of loss gradient} \label{sec:variance_estimator}
In this section, we derive the formula for estimating the traces of the covariance matrices of the loss gradient under three settings: importance sampling with~$W$, uniform sampling, and importance sampling with the optimal~$W^\ast$. All these estimations can be efficiently carried out using only a sample drawn under importance sampling with~$W$.

\subsection{Trace formulas for gradient variance}
We extended the results by Alain et al. \cite{alain2016variance} to derive the following proposition.
\begin{proposition}[Trace of gradient variance under uniform and importance sampling] \label{prop:tr_cov_est}
    The traces of the covariance matrices of the loss gradient under importance sampling with~$W$, uniform sampling, and importance sampling with~$W^\ast$ are given as
    \begin{align}
        \tr\left(\mathbb{V}_{\mathrm{is}(W)}\left[\nabla_\theta \mathcal{L}(\theta)\right]\right)      & = \mathbb{E}_{i \sim p_\mathrm{is}(i;W)}\left[\left\|r(i;W) \nabla_\theta \mathcal{L}(\theta;i)\right\|^2\right] - \left\|\mu\right\|^2, \notag                    \\                   
        \tr\left(\mathbb{V}_\mathrm{unif}\left[\nabla_\theta \mathcal{L}(\theta)\right]\right)         & = \mathbb{E}_{i \sim p_\mathrm{is}(i;W)}\left[r(i;W) \left\|\nabla_\theta \mathcal{L}(\theta;i)\right\|^2\right] - \left\|\mu\right\|^2,                    \notag \\ 
        \tr\left(\mathbb{V}_{\mathrm{is}(W^\ast)}\left[\nabla_\theta \mathcal{L}(\theta)\right]\right) & = \left(\mathbb{E}_{i \sim p_\mathrm{is}(i;W)}\left[r(i;W) \left\|\nabla_\theta \mathcal{L}(\theta;i)\right\|\right]\right)^2 - \left\|\mu\right\|^2, \notag       
    \end{align}
    where~$\mu$ is given by
    \begin{align}
        \mu & =  \mathbb{E}_{i \sim p_\mathrm{is}(i;W)}\left[r(i;W) \nabla_\theta \mathcal{L}(\theta;i)\right]. \notag 
    \end{align}
\end{proposition}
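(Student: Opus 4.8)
The plan is to reduce all three identities to two elementary facts. First, for any square-integrable random vector $Z$ one has $\tr(\mathbb{V}[Z]) = \mathbb{E}[\|Z\|^2] - \|\mathbb{E}[Z]\|^2$, obtained by expanding $\mathbb{E}[\|Z-\mathbb{E}[Z]\|^2]$ coordinatewise and summing over the $K$ components. Second, because $W\in\mathbb{R}^M_{>0}$ forces $p_\mathrm{is}(i;W)>0$ for every $i\in\llbracket M\rrbracket$, the coefficient $r(i;W)=p_\mathrm{unif}(i)/p_\mathrm{is}(i;W)$ is well-defined, and the exact change-of-measure identity $\mathbb{E}_{i\sim p_\mathrm{unif}(i)}[g(i)] = \mathbb{E}_{i\sim p_\mathrm{is}(i;W)}[r(i;W)\,g(i)]$ holds for every function $g$, simply because $p_\mathrm{unif}(i) = p_\mathrm{is}(i;W)\,r(i;W)$ termwise in the defining sums.

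For the first formula, I would apply the trace identity with $Z = r(i;W)\,\nabla_\theta\mathcal{L}(\theta;i)$ and $i\sim p_\mathrm{is}(i;W)$; then $\mathbb{E}[Z]$ is exactly the $\mu$ of the statement, so $\tr(\mathbb{V}_{\mathrm{is}(W)}[\nabla_\theta\mathcal{L}(\theta)]) = \mathbb{E}_{i\sim p_\mathrm{is}(i;W)}[\|r(i;W)\nabla_\theta\mathcal{L}(\theta;i)\|^2] - \|\mu\|^2$. For the second formula, I would first apply the trace identity with $Z = \nabla_\theta\mathcal{L}(\theta;i)$ and $i\sim p_\mathrm{unif}(i)$, giving $\mathbb{E}_{i\sim p_\mathrm{unif}(i)}[\|\nabla_\theta\mathcal{L}(\theta;i)\|^2] - \|\mathbb{E}_\mathrm{unif}[\nabla_\theta\mathcal{L}(\theta)]\|^2$; by \cref{prop:exp_equiv} the subtracted term equals $\|\mu\|^2$, and by the change-of-measure identity applied to $g(i)=\|\nabla_\theta\mathcal{L}(\theta;i)\|^2$ the first term becomes $\mathbb{E}_{i\sim p_\mathrm{is}(i;W)}[r(i;W)\|\nabla_\theta\mathcal{L}(\theta;i)\|^2]$. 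For the third formula, I would start from the closed form \cref{eq:tr_V_is_ast} already established for $W^\ast$, namely $(\mathbb{E}_{i\sim p_\mathrm{unif}(i)}[\|\nabla_\theta\mathcal{L}(\theta;i)\|])^2 - \|\mathbb{E}_{i\sim p_\mathrm{unif}(i)}[\nabla_\theta\mathcal{L}(\theta;i)]\|^2$; the second term is $\|\mu\|^2$ by \cref{prop:exp_equiv}, and the change-of-measure identity applied to $g(i)=\|\nabla_\theta\mathcal{L}(\theta;i)\|$ turns the first term into $(\mathbb{E}_{i\sim p_\mathrm{is}(i;W)}[r(i;W)\|\nabla_\theta\mathcal{L}(\theta;i)\|])^2$.

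These steps are all routine; the only thing demanding care is consistent bookkeeping of the sampling distribution. In each of the three right-hand sides the expectation must be taken with respect to $p_\mathrm{is}(i;W)$ for the single fixed $W$ from the hypothesis — not $W^\ast$ — so in the uniform and $W^\ast$ cases the change of measure via $r(i;W)$ has to be inserted rather than evaluating the expectations in their original form, and one must check that the vector $\mu$ appearing in all three identities is genuinely the common gradient mean of \cref{prop:exp_equiv}. Beyond this bookkeeping I do not anticipate any substantive obstacle.
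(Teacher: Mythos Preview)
Your proposal is correct and follows essentially the same route as the paper's proof: the paper likewise derives the first identity by expanding $\tr(\mathbb{V}[Z])=\mathbb{E}[\|Z\|^2]-\|\mathbb{E}[Z]\|^2$ coordinatewise for $Z=r(i;W)\nabla_\theta\mathcal{L}(\theta;i)$, and obtains the second and third identities by starting from the uniform expression (respectively \cref{eq:tr_V_is_ast}) and applying the change-of-measure identity with $g(i)=\|\nabla_\theta\mathcal{L}(\theta;i)\|^2$ and $g(i)=\|\nabla_\theta\mathcal{L}(\theta;i)\|$, together with \cref{prop:exp_equiv} to identify the subtracted term as $\|\mu\|^2$. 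The only cosmetic difference is that you isolate the trace identity and the change-of-measure identity as two reusable lemmas at the outset, whereas the paper unfolds the first one inline; the substance is identical.
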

\begin{proof}
    See \cref{apdx:proof_prop_tr_cov_est}.
\end{proof}

\subsection{Online variance estimation during minibatch-based training}
The key point of \cref{prop:tr_cov_est} is that all expectations in the formulas are taken with respect to~$p_\mathrm{is}(i;W)$. This allows the traces of the covariance matrices to be estimated using only samples drawn from~$p_\mathrm{is}(i;W)$.
Therefore, during training with minibatches generated via importance sampling, all relevant trace values can be estimated solely from the loss gradients of the data points within the minibatch.
For reference, \cref{apdx:pseudo_code} presents the pseudocode for variance estimation based on \cref{prop:tr_cov_est}, given~$r(i;W)$ and the loss gradients of each data point in a minibatch generated according to~$p_\mathrm{is}(i;W)$.

\section{Effective-minibatch-size estimation and learning-rate adjustment} \label{sec:ems}
We derived an EMS that achieves the same variance reduction under uniform sampling as that achieved by importance sampling with importance weights~$W$ and a minibatch of size~$N$.
We then propose its application to automatic learning-rate adjustment.

\subsection{Effective minibatch size}
Our EMS is defined as follows.
\begin{definition}[Effective minibatch size]
    For DNN training with a minibatch size of~$N$ under importance sampling with~$W$, the~$N_\mathrm{ems}$ is defined as
    \begin{align}
        N_\mathrm{ems} := \frac{\tr\left(\mathbb{V}_\mathrm{unif}\left[\nabla_\theta\mathcal{L}(\theta)\right]\right)}{\tr\left(\mathbb{V}_{\mathrm{is}(W)}\left[\nabla_\theta\mathcal{L}(\theta)\right]\right)} N.  \label{eq:n_ems}
    \end{align}
\end{definition}
Note that the traces of the covariance matrices in \cref{eq:n_ems} can be efficiently estimated during DNN training by using \cref{prop:tr_cov_est}.
The following proposition holds for the EMS:
\begin{proposition} \label{prop:ems}
    The following two settings are equivalent in terms of the expectation and trace of the covariance matrix for the loss-gradient estimation:
    \begin{itemize}
        \item Training with a minibatch of size~$N$ under importance sampling with~$W$.
        \item Training with a minibatch of size~$N_\mathrm{ems}$ under uniform sampling.
    \end{itemize}
\end{proposition}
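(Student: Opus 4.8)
The plan is to make the two ``settings'' concrete as minibatch gradient estimators, compute the mean and the trace of the covariance of each using only the i.i.d.\ structure of minibatch sampling, and then verify that the definition of $N_\mathrm{ems}$ in \cref{eq:n_ems} is exactly what forces the two covariance traces to agree. Throughout, ``a minibatch of size $n$'' means $n$ indices drawn i.i.d.\ (with replacement, as in \cref{sec:preliminary}) from the relevant distribution, and $\mathbb{E}[\cdot]$, $\mathbb{V}[\cdot]$ denote expectation and covariance over that random minibatch.

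First I would write the estimators explicitly. For a minibatch $\{i_1,\dots,i_N\}$ drawn i.i.d.\ from $p_\mathrm{is}(i;W)$, the importance-sampling estimator is $\hat{g}_\mathrm{is} := \frac{1}{N}\sum_{j=1}^{N} r(i_j;W)\,\nabla_\theta\mathcal{L}(\theta;i_j)$; for a minibatch $\{i_1,\dots,i_{N_\mathrm{ems}}\}$ drawn i.i.d.\ from $p_\mathrm{unif}(i)$, the uniform-sampling estimator is $\hat{g}_\mathrm{unif} := \frac{1}{N_\mathrm{ems}}\sum_{j=1}^{N_\mathrm{ems}} \nabla_\theta\mathcal{L}(\theta;i_j)$. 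By linearity of expectation, $\mathbb{E}[\hat{g}_\mathrm{is}] = \mathbb{E}_{\mathrm{is}(W)}[\nabla_\theta\mathcal{L}(\theta)]$ and $\mathbb{E}[\hat{g}_\mathrm{unif}] = \mathbb{E}_\mathrm{unif}[\nabla_\theta\mathcal{L}(\theta)]$, and these two vectors coincide by \cref{prop:exp_equiv}; this settles the equivalence of expectations.

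Next I would treat the covariance. Since the summands within each minibatch are i.i.d., the covariance of a sample mean of $n$ terms is $1/n$ times the per-sample covariance, so $\mathbb{V}[\hat{g}_\mathrm{is}] = \frac{1}{N}\,\mathbb{V}_{\mathrm{is}(W)}[\nabla_\theta\mathcal{L}(\theta)]$ and $\mathbb{V}[\hat{g}_\mathrm{unif}] = \frac{1}{N_\mathrm{ems}}\,\mathbb{V}_\mathrm{unif}[\nabla_\theta\mathcal{L}(\theta)]$. Taking traces and substituting $N_\mathrm{ems}$ from \cref{eq:n_ems},
\begin{align*}
\tr\!\left(\mathbb{V}[\hat{g}_\mathrm{unif}]\right) &= \frac{1}{N_\mathrm{ems}}\tr\!\left(\mathbb{V}_\mathrm{unif}[\nabla_\theta\mathcal{L}(\theta)]\right) = \frac{\tr\!\left(\mathbb{V}_{\mathrm{is}(W)}[\nabla_\theta\mathcal{L}(\theta)]\right)}{N\,\tr\!\left(\mathbb{V}_\mathrm{unif}[\nabla_\theta\mathcal{L}(\theta)]\right)}\,\tr\!\left(\mathbb{V}_\mathrm{unif}[\nabla_\theta\mathcal{L}(\theta)]\right) \\
&= \frac{1}{N}\tr\!\left(\mathbb{V}_{\mathrm{is}(W)}[\nabla_\theta\mathcal{L}(\theta)]\right) = \tr\!\left(\mathbb{V}[\hat{g}_\mathrm{is}]\right),
\end{align*}
which is precisely the asserted equality of covariance traces.

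The computation is routine; the only point needing care — not a genuine obstacle — is that $N_\mathrm{ems}$ need not be an integer, so ``a minibatch of size $N_\mathrm{ems}$'' must be read through the variance-scaling law $\mathbb{V}[\text{mean of }n\text{ i.i.d.\ terms}] = \tfrac{1}{n}(\text{per-sample covariance})$ extended to real $n>0$, rather than as a literal sampling procedure; I would state this reading at the outset. It is also worth flagging that this $1/n$ scaling relies on sampling \emph{with replacement} within the minibatch, which is the convention already fixed in \cref{sec:preliminary}, so no new assumption is introduced.
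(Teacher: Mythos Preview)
Your proposal is correct and follows essentially the same route as the paper's proof: define the two minibatch estimators, use \cref{prop:exp_equiv} for equality of expectations, invoke the i.i.d.\ structure to get the $1/n$ covariance scaling, and then substitute the definition of $N_\mathrm{ems}$ to match the traces. Your additional remarks about $N_\mathrm{ems}$ not being an integer and about the with-replacement convention are sensible clarifications that go slightly beyond what the paper spells out, but the core argument is identical.
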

\begin{proof}
    See \cref{apdx:proof_prop_ems}.
\end{proof}
In contrast to ESS \cite{kong1992note, liu1996metropolized}, denoted as~$N_\mathrm{ess}$, which is implicitly restricted to~$N_\mathrm{ess} \leq N$ \cite{MARTINO2017386}, the proposed EMS can satisfy~$N_\mathrm{ems} > N$ depending on~$W$.
Therefore, \cref{prop:ems} suggests that, with an appropriate~$W$, training under importance sampling with a minibatch of size~$N$ is \emph{first-order equivalent}---in the sense that it matches both the expected stochastic gradient and the trace of its covariance matrix---to training under uniform sampling with a minibatch of size~$N_\mathrm{ems}$, where $N_\mathrm{ems}>N$.
This suggests that importance sampling with an appropriate~$W$ effectively increases the number of training epochs for a fixed number of training iterations, as training with a larger minibatch size corresponds to an increase in the number of effective epochs.

\subsection{Application to automatic learning rate adjustment} \label{sec:epsilon_ems}
It has been demonstrated that, in DNN training using SGD, increasing the minibatch size by a factor of~$\alpha$ and scaling the learning rate by~$\alpha^{-1}$ have equivalent effects on controlling the magnitude of noise during training~\cite{smith2018bayesian, smith2018don, li2024surge}. Therefore, under importance sampling with~$W$, the learning rate is implicitly scaled by a factor of~$N/N_\mathrm{ems}$, as the minibatch size is implicitly scaled by~$N_\mathrm{ems}/N$ according to \cref{prop:ems}.
This suggests that the effective learning rate in importance-sampling-based training may differ from the specified learning rate~$\epsilon$.
To address this issue, we propose adjusting~$\epsilon$ as follows to mitigate the impact of changes in the EMS caused by importance sampling:
\begin{align}
    \epsilon_\mathrm{ems} := \frac{N_\mathrm{ems}}{N}\epsilon = \frac{\tr\left(\mathbb{V}_\mathrm{unif}\left[\nabla_\theta\mathcal{L}(\theta)\right]\right)}{\tr\left(\mathbb{V}_{\mathrm{is}(W)}\left[\nabla_\theta\mathcal{L}(\theta)\right]\right)} \epsilon. \label{eq:epsilon_ems}
\end{align}
It should be noted that the optimal learning rate scaling rule when changing the minibatch size depends on the choice of optimizer. The above definition of~$\epsilon_\mathrm{ems}$ assumes the use of SGD as the optimizer. For the Adam optimizer, for example, it has been both empirically \cite{mccandlish2018empirical} and theoretically \cite{NEURIPS2024_ef74413c} suggested that, especially when the minibatch size is relatively small, the learning rate should be scaled by the square root of the minibatch size ratio. Therefore, when using the Adam optimizer, the learning rate can be adjusted based on~$N_\mathrm{ems}$ by setting $\epsilon_\mathrm{ems}^\mathrm{adam} := \sqrt{N_\mathrm{ems}/N}~\epsilon$ instead of using \cref{eq:epsilon_ems}.

\section{Designing importance-weight-estimation algorithm using variance estimators}  \label{sec:ema-is}
As discussed in \cref{sec:is_var}, setting~$W$ proportional to the per-sample gradient norm is optimal for variance reduction.
However, since the gradients evolve during training, directly computing their exact values at every step is computationally expensive and impractical.
We designed an algorithm to efficiently estimate the per-sample gradient norms using their moving statistics during training.
We first introduce an absolute metric to evaluate~$W$ called~$\mathcal{S}(W)$ that is based on \cref{prop:tr_cov_est}, enabling real-time monitoring of the effectiveness of importance sampling.
It is particularly useful for designing and assessing algorithms to estimate~$W$.
As an example, we use~$\mathcal{S}(W)$ to determine the hyperparameters for the moving statistics.

\subsection{Absolute metric for effectiveness of importance sampling} \label{sec:metric}
To evaluate the quality of~$W$, we introduce~$\mathcal{S}(W)$:
\begin{align}
    \mathcal{S}(W) := \frac{\tr\left(\mathbb{V}_{\mathrm{is}(W)}\left[\nabla_\theta \mathcal{L}(\theta)\right]\right) - \tr\left(\mathbb{V}_{\mathrm{is}(W^\ast)}\left[\nabla_\theta \mathcal{L}(\theta)\right]\right)}{\tr\left(\mathbb{V}_\mathrm{unif}\left[\nabla_\theta \mathcal{L}(\theta)\right]\right) - \tr\left(\mathbb{V}_{\mathrm{is}(W^\ast)}\left[\nabla_\theta \mathcal{L}(\theta)\right]\right)}. \notag
\end{align}
The quantities required to compute~$\mathcal{S}(W)$ can be efficiently evaluated using \cref{prop:tr_cov_est}.
The~$\mathcal{S}(W)$ takes values greater than or equal to zero and can be interpreted as follows.
\begin{itemize}
    \item When~$\mathcal{S}(W)$ is close to~$0$, the~$W$ perform nearly as well as~$W^\ast$.
    \item When~$\mathcal{S}(W) = 1$, importance sampling with~$W$ is equivalent to uniform sampling in terms of the variance of the loss-gradient estimation.
    \item When~$\mathcal{S}(W) > 1$, importance sampling with~$W$ results in a larger variance of the loss-gradient estimation compared with uniform sampling.
\end{itemize}
Therefore, if~$0 \leq \mathcal{S}(W) < 1$, importance sampling effectively reduces the variance, with smaller~$\mathcal{S}(W)$ indicating greater variance reduction.
Conversely, if~$\mathcal{S}(W) \geq 1$, importance sampling fails to reduce the variance compared with uniform sampling.

\subsection{Gradient-norm estimation using moving statistics} \label{sec:emais}
At training iteration~$t$, the gradient norm for data sample~$i$ is defined as~$g_i^t$:
\begin{align}
    g_i^t := \left\| \left.\nabla_\theta \mathcal{L}(\theta;i)\right|_{\theta = \theta^t} \right\|, \label{eq:per_sample_grad_norm}
\end{align}
where~$\theta^t$ represents the model parameters at~$t$.
Note that~$\theta^t$ is updated throughout the training process.

Our \cref{alg:ema_is} estimates~$W$ using moving statistics of~$g_i^t$ during training. In \cref{alg:ema_is}, the moving average of~$g_i^t$ is maintained as the internal state~$\hat{\mu}_i \ \left(i \in \llbracket M \rrbracket\right)$. At~$t$, when~$g_i^t$ is computed for~$i$, the moving average is updated via \textproc{UpdateStats}.
We assume that~$g_i^t$ is computed only for data samples included in the minibatch at~$t$, and \textproc{UpdateStats} is applied accordingly. Notably, when importance sampling is used, the intervals between iterations at which an~$i$ appears in the minibatch become non-uniform. To address this, \textproc{UpdateStats} extends the exponential moving average (EMA) to account for non-uniform time intervals \cite{eckner2012algorithms}.
Furthermore, \cref{alg:ema_is} estimates the moving variance~$\hat{\sigma}_i^2$ \cite{finch2009incremental} alongside the moving average~$\hat{\mu}_i$.
The~$W$ is then computed by summing these statistics in \textproc{ComputeImportance}.
The inclusion of~$\hat{\sigma}_i^2$ serves to prevent numerical instabilities, as discussed in \cite{alain2016variance,johnson2018training}.
To prevent the sampling probabilities derived from~$W$ from becoming excessively extreme, we do not directly use those computed by \cref{eq:prob_is}.
Instead, we employ the \textproc{ComputeAdjustedProbabilities} procedure in \cref{alg:ema_is} to obtain a flattened distribution.
This adjustment ensures that the expected maximum number of duplicates per minibatch does not exceed a specified hyperparameter~$\kappa$, which governs the degree of flattening.
In our experiments, we set~$\kappa = 1$.
\begin{algorithm}[tbhp]
    \caption{$W$ estimation with unevenly spaced moving statistics}
    \label{alg:ema_is}
    \textbf{[Hyper parameter]} $\tau (> 0)$: time constant for exponential decay in the moving statistics, $N$: minibatch size, $\kappa$: expected maximum duplicates per minibatch \\
    \textbf{[Inputs]} $t$: current iteration of training, $g_i^t$: gradient norm of the loss for data sample~$i$ at~$t$\\
    \textbf{[State variables]} $\hat{\mu}_i$: moving average for data sample~$i$, $\hat{\sigma}_i^2$: moving variance for~$i$, $t_i^\text{prev}$ : iteration where~$i$ was last evaluated
    \begin{algorithmic}[1]
        \Function{UpdateStats}{$t$, $g_i^t$}
        \State{$\alpha \leftarrow \exp \left(- \left(t - t_i^\text{prev}\right) / \tau \right)$} \label{alg:calc_alpha}
        \State{$\delta \leftarrow g_i^t - \hat{\mu}_i$}
        \State{$\hat{\mu}_i \leftarrow \hat{\mu}_i + \left(1 - \alpha\right) \delta$} \label{alg:update_mu}
        \State{$\hat{\sigma}_i^2 \leftarrow \alpha \left(\hat{\sigma}_i^2 + \left(1 - \alpha\right)\delta^2\right)$}
        \State{$t_i^\text{prev} \leftarrow t$}
        \EndFunction
        \Function{ComputeImportance}{ }
        \State{$w_i \leftarrow \hat{\mu}_i + \sqrt{\hat{\sigma}_i^2}, \quad \forall i \in \llbracket M \rrbracket$}
        \State{\Return{$W=\left(w_1, \ldots, w_M\right)^\top$}}
        \EndFunction
        \Function{ComputeAdjustedProbabilities}{$W$}
        \State{$p_i \leftarrow w_i / \sum_{i' \in \llbracket M \rrbracket} w_{i'}, \quad \forall i \in \llbracket M \rrbracket$}
        \While{$\max_i(p_i)\times N > \kappa$}
        \State{$p_i \leftarrow \sqrt{p_i} / \sum_{i' \in \llbracket M \rrbracket} \sqrt{p_{i'}}, \quad \forall i \in \llbracket M \rrbracket$}
        \EndWhile
        \State{\Return{$\left(p_1, \ldots, p_M\right)^\top$}}
        \EndFunction
    \end{algorithmic}
\end{algorithm}

\paragraph{Initialization of internal variables}
The internal state variables in \cref{alg:ema_is} are initialized as follows: We first conduct two epochs of training using uniform sampling without replacement, prior to applying importance sampling. The gradient norms~$g_i^t$ obtained during this uniform sampling phase are used to estimate the initial~$\hat{\mu}_i$ and~$\hat{\sigma}_i^2$.

\paragraph{Hyperparameter for moving statistics}
Hyperparameter~$\tau$ in \cref{alg:ema_is} determines the extent to which past observations affect the moving statistics.
For example, when~$\tau$ is small,~$\alpha$ computed in line~\ref{alg:calc_alpha} becomes small (closer to~$0$). Consequently, in line~\ref{alg:update_mu}, the moving average~$\hat{\mu}_i$ is updated to place greater emphasis on the current observation~$g_i^t$ rather than past values. Conversely, when~$\tau$ is large,~$\alpha$ approaches~$1$, causing the update to prioritize past values over the current observation.
In the next section, we examine the effect of~$\tau$ on~$W$ estimation using~$\mathcal{S}(W)$ and discuss strategies for determining an appropriate~$\tau$.

\subsection{Hyperparameter investigation using the proposed metric}
A straightforward approach to determining~$\tau$ is to vary it, conduct training, and selecting the~$\tau$ that achieves the highest prediction accuracy on a validation dataset. However, metrics, such as prediction accuracy, can be affected by factors other than~$\tau$, making it difficult to directly evaluate the quality of~$\tau$ or the resulting~$W$. To address this, we investigated the impact of~$\tau$ on the effectiveness of importance sampling using~$\mathcal{S}(W)$, which is estimated during training based on per-sample
gradients.

\subsubsection{Preliminary experiments using benchmark datasets} \label{sec:preliminary_exp}
We conducted importance-sampling-based training using~$W$, estimated with \cref{alg:ema_is}, on the CINIC-10 \cite{cinic10} and Fashion-MNIST (FMNIST) \cite{fmnist} datasets. Details of the training setup are provided in \cref{sec:train_setting}. The total number of training iterations was set to 70000 for CINIC-10 and 25000 for FMNIST. For each setting, we conducted training five times with different random seeds and evaluated the mean and standard deviation of~$\mathcal{S}(W)$ throughout the training process.

For the CINIC-10 dataset, we fixed~$\tau$ in \cref{alg:ema_is} to 5000, 10000, 30000, or 70000 and conducted training for each setting. The estimated~$\mathcal{S}(W)$ during training is shown in \cref{fig:is_scores:cinic10} (labeled as ``Fixed'' in the legend). The horizontal axis represents the training iterations, and the vertical axis represents~$\mathcal{S}(W)$.
For FMNIST, we fixed~$\tau$ to 1000, 5000, 10000, or 25000 and conducted training for each setting. The results are shown in \cref{fig:is_scores:fmnist}.
As described in \cref{sec:metric},~$\mathcal{S}(W)=0$ represents~$W^\ast$, while~$\mathcal{S}(W)=1$ indicates variance reduction equivalent to uniform sampling. Both are depicted in the figures as gray lines: a dashed line labeled ``Optimal IS'' and dash-dotted line labeled ``Uniform,'' respectively.
\begin{figure}[tbhp]
    \centering
    \subfloat[CINIC-10]{\label{fig:is_scores:cinic10}\includegraphics[width=0.5\linewidth,clip]{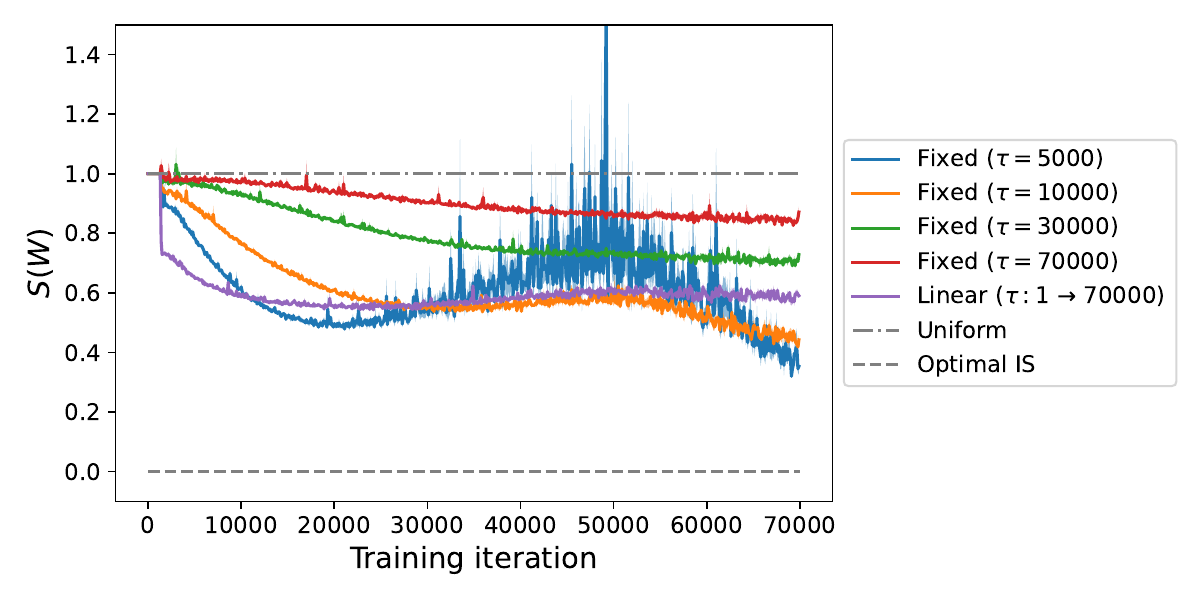}}
    \subfloat[FMNIST]{\label{fig:is_scores:fmnist}\includegraphics[width=0.5\linewidth,clip]{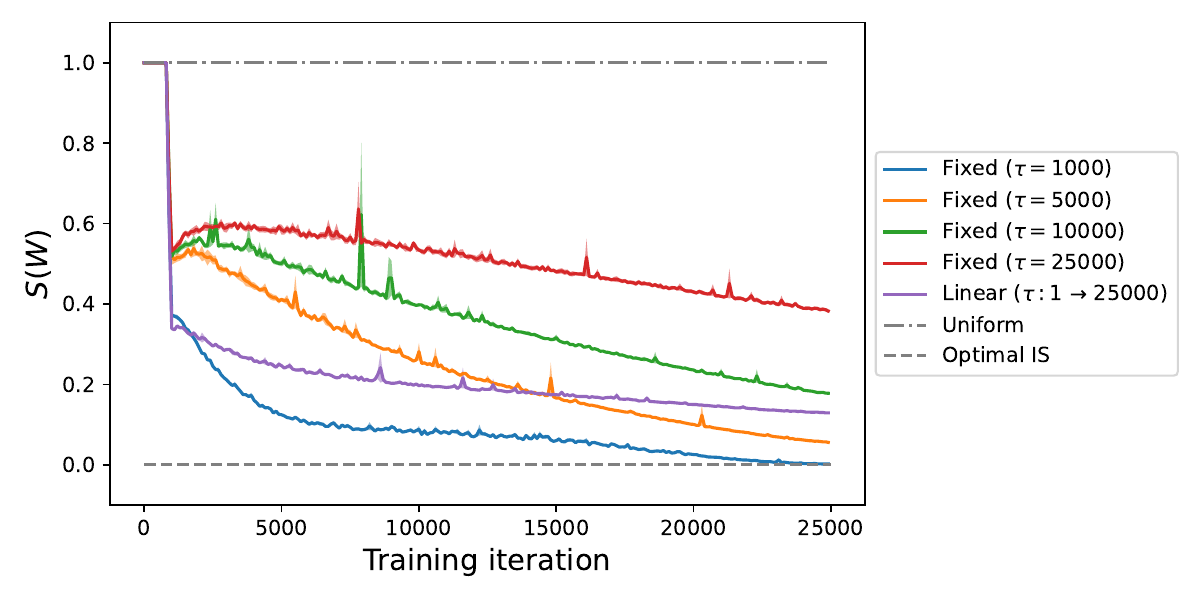}}
    \caption{Transitions of~$\mathcal{S}(W)$ during training for CINIC-10 and FMNIST.}
    \label{fig:is_scores}
\end{figure}

As shown in \cref{fig:is_scores:cinic10}, when~$\tau$ is set relatively small (e.g.,~$\tau=5000$) for CINIC-10,~$\mathcal{S}(W)$ remains small during the early to middle stages of training but becomes larger and unstable in the later stages. In contrast, as~$\tau$ increases,~$\mathcal{S}(W)$ exhibits more stable behavior throughout training. However, when~$\tau$ is excessively large,~$\mathcal{S}(W)$ decreases more slowly.
As shown in \cref{fig:is_scores:fmnist}, for FMNIST, similar to CINIC-10, smaller fixed values of~$\tau$ result in smaller~$\mathcal{S}(W)$ during the early stages of training. However, unlike CINIC-10, even for the smallest value~$\tau=1000$,~$\mathcal{S}(W)$ remains stable throughout the middle stages of training and continues to stay small until the final stages.
These results suggests that the appropriate~$\tau$ can vary significantly depending on the dataset and model.

\subsubsection{Dynamic~$\tau$}
On the basis of the above results, we propose varying~$\tau$ during training instead of using a fixed value. Specifically, we set~$\tau$ to the current~$t$ by inserting~$\tau \leftarrow t$ just before line~\ref{alg:calc_alpha} in \cref{alg:ema_is}. Hereafter, we refer to this method as \emph{Linear-$\tau$}, which notably eliminates the need to tune the hyperparameter~$\tau$ in \cref{alg:ema_is}.

The training results for CINIC-10 using Linear-$\tau$ are shown in \cref{fig:is_scores:cinic10} (labeled as ``Linear'' in the legend). Linear-$\tau$ consistently achieved small~$\mathcal{S}(W)$ throughout the training process.
These results suggest that in the early stages of training, importance sampling is more efficient when~$w_i$ adapts to the most recent~$g_i^t$. However, as training progresses, estimating~$w_i$ on the basis of a longer-term history of~$g_i^t$ leads to more stable importance sampling.
The training results for FMNIST using Linear-$\tau$ are shown in \cref{fig:is_scores:fmnist}. While~$\mathcal{S}(W)$ was slightly larger compared with when~$\tau$ was fixed at~1000, it still maintained relatively small~$\mathcal{S}(W)$ throughout the training process.
Due to~$\mathcal{S}(W)$, which efficiently assesses the effectiveness of importance sampling across the training process, a more refined and effective design of~$\tau$ was achieved.
As a reference, the efficiency score~$\mathcal{S}(W)$ obtained using Linear-$\tau$, along with the corresponding estimated trace of gradient variances used in its computation, is provided in \cref{apdx:score_and_trvars}.

We also conducted training using Linear-$\tau$ while varying the total number of training iterations and evaluated the corresponding~$\mathcal{S}(W)$. The results in \cref{apdx:analysis_linear_tau} show that Linear-$\tau$ achieves consistently small and stable~$\mathcal{S}(W)$ across different total numbers of training iterations.

\section{Efficient per-sample gradient approximation via logit gradients} \label{sec:approx_by_logit}
The formula in \cref{prop:tr_cov_est}, along with the derived quantities~$N_{\mathrm{ems}}$ and~$\mathcal{S}(W)$, requires the computation of the loss gradient~$\nabla_\theta \mathcal{L}(\theta; i)$ for each data point.
However, calculating the loss gradient with respect to~$\theta$ introduces significant computational overhead.
To address this issue, methods have been proposed to use the gradient with respect to the output of the final layer of the DNN (referred to as the logits) instead of the gradient with respect to~$\theta$ \cite{johnson2018training,katharopoulos2018not}.
Compared with computing the gradient for~$\theta$, the gradient with respect to the logits can be obtained at a significantly lower computational cost.

By defining the logits as~$z = f_\theta(x)$, the training loss function can be written as~$\ell(z, y)$.
We then introduce the auxiliary function~$\tilde{\mathcal{L}}(z; i) := \ell(z, y^{(i)})$.
Instead of computing the parameter gradient~$\nabla_\theta \mathcal{L}(\theta; i)$, we first evaluate the logit gradient~$\nabla_z \tilde{\mathcal{L}}(z; i) := \left.\nabla_z \ell(z, y^{(i)})\right|_{z = f_\theta(x^{(i)})}$, and substitute~$\nabla_z \tilde{\mathcal{L}}(z; i)$ for~$\nabla_\theta \mathcal{L}(\theta; i)$ in the formula in~\cref{prop:tr_cov_est}, resulting in the following expressions for the trace of the variance-covariance matrices of the gradient with respect to the logits:
\begin{align}
    \tr\left(\mathbb{V}_{\mathrm{is}(W)}\left[\nabla_z \tilde{\mathcal{L}}(z)\right]\right)      & = \mathbb{E}_{i \sim p_\mathrm{is}(i;W)}\left[\left\|r(i;W) \nabla_z \tilde{\mathcal{L}}(z;i)\right\|^2\right] - \left\|\mu_z\right\|^2, \label{eq:trvar_is_logit}                    \\
    \tr\left(\mathbb{V}_\mathrm{unif}\left[\nabla_z \tilde{\mathcal{L}}(z)\right]\right)         & = \mathbb{E}_{i \sim p_\mathrm{is}(i;W)}\left[r(i;W) \left\|\nabla_z \tilde{\mathcal{L}}(z;i)\right\|^2\right] - \left\|\mu_z\right\|^2, \label{eq:trvar_unif_logit}                  \\
    \tr\left(\mathbb{V}_{\mathrm{is}(W^\ast)}\left[\nabla_z \tilde{\mathcal{L}}(z)\right]\right) & = \left(\mathbb{E}_{i \sim p_\mathrm{is}(i;W)}\left[r(i;W) \left\|\nabla_z \tilde{\mathcal{L}}(z;i)\right\|\right]\right)^2 - \left\|\mu_z\right\|^2, \label{eq:trvar_ideal_is_logit}
\end{align}
where~$\mu_z$ is given by
\begin{align}
    \mu_z & =  \mathbb{E}_{i \sim p_\mathrm{is}(i;W)}\left[r(i;W) \nabla_z \tilde{\mathcal{L}}(z;i)\right]. \notag 
\end{align}
Note that, for example, in an~$N_\mathrm{c}$-class classification problem, the dimension of~$z$ is typically~$N_\mathrm{c}$, and thus the dimension of~$\nabla_z \tilde{\mathcal{L}}(z; i)$ is also~$N_\mathrm{c}$.
In contrast, the dimension of~$\nabla_\theta \mathcal{L}(\theta; i)$ is~$K$, corresponding to the number of trainable parameters.
Since~$N_\mathrm{c} \ll K$ in most cases, the two gradients differ in size, and their norms can vary substantially.
However, as demonstrated by previous studies \cite{johnson2018training} and by our experiments in \cref{sec:assess_logit_grad_est}, the norms of~$\nabla_z \tilde{\mathcal{L}}(z; i)$ and~$\nabla_\theta \mathcal{L}(\theta; i)$ exhibit a strong proportional relationship across different values of~$i$.
A similar trend is observed, for example, between~$\tr\left(\mathbb{V}_{\mathrm{is}(W)}\left[\nabla_\theta \mathcal{L}(\theta)\right]\right)$ and~$\tr\left(\mathbb{V}_{\mathrm{is}(W)}\left[\nabla_z \tilde{\mathcal{L}}(z)\right]\right)$.
Although the magnitudes of these quantities may differ significantly, our experiments (see \cref{sec:assess_logit_grad_est}) empirically confirm a strong linear relationship between them.

While the estimated trace of the gradient variance based on the logits exhibits only a linear relationship with that computed from the parameter gradient, our experiments confirmed that the derived quantities~$N_{\mathrm{ems}}$ and~$S(W)$ can be estimated with a reasonable degree of absolute accuracy.
This is attributed to the fact that~$N_{\mathrm{ems}}$ and~$S(W)$ are defined based on the ratio of variance traces and the ratio of differences between variance traces, respectively.
The estimation errors of~$N_{\mathrm{ems}}$ and~$S(W)$ were also empirically evaluated in \cref{sec:assess_logit_grad_est}.

Throughout the experiments in \cref{sec:experiment}, as well as the preliminary experiments in \cref{sec:preliminary_exp}, variance and score estimations were performed using gradients with respect to the logits as an approximation, except in \cref{sec:assess_logit_grad_est}, which discusses the approximation accuracy of estimations based on the logit gradient.
Note that~$g_i^t$ in \cref{alg:ema_is} was also approximated using the logit gradients.
Since the relationship $p_\mathrm{is}(i;W) = p_\mathrm{is}(i; \gamma W)$ holds for any~$\gamma > 0$, using logit-gradient norms in \cref{alg:ema_is} provides a reasonable basis for estimating~$W$, assuming that the norm of the logit gradient is proportional to that of the parameter gradient.

\section{Overall training procedure of EMAIS} \label{sec:overall_procedure}
The complete procedure for EMAIS training, incorporating the logit-based gradient approximation, is presented in \cref{alg:train_emais}.
It is important to note that, in~\cref{alg:train_emais}, both~$\nabla_z \tilde{\mathcal{L}}(z; i_k)$ and~$\mathcal{L}(\theta; i_k)$ are required for each data index~$i_k$. However, in practice, these quantities can be computed simultaneously in a single forward pass rather than evaluated separately: we first compute~$z^{(i_k)} = f_\theta(x^{(i_k)})$, then obtain~$\nabla_z \tilde{\mathcal{L}}(z; i_k)$ as~$\left.\nabla_z \ell(z, y)\right|_{z = z^{(i_k)}}$ and~$\mathcal{L}(\theta; i_k)$ as~$\ell(z^{(i_k)}, y^{(i_k)})$.
\begin{algorithm}[t]
    \caption{EMAIS training with logit-based gradient approximation}
    \label{alg:train_emais}
    \begin{algorithmic}[1]
        \Require $\{(x^{(i)},y^{(i)})\}_{i=1}^M$: training data, $\left\{\epsilon_t\right\}_{t=1}^T$: learning rate, $N$: minibatch size, $T$: total number of iterations, $t'$: number of iterations corresponding to two training epochs
        \For{$t=1$ to $t'$}  \Comment{Initialize EMAIS internal statistics}
        \State Sample indices~$\{i_k\}_{k=1}^N\sim p_{\mathrm{unif}}(i)$ without replacement
        \For{each~$i_k$}
        \State $\textproc{UpdateStats}\left(t, \left\|\nabla_z \tilde{\mathcal{L}}(z; i_k)\right\|\right)$ \Comment{See \cref{alg:ema_is}}
        \EndFor
        \State $\hat{g} \leftarrow \nabla_\theta \left(\frac{1}{N}\sum_{k=1}^N \mathcal{L}(\theta;i_k)\right)$
        \State $\theta \leftarrow \theta -\epsilon_t \hat{g}$
        \EndFor
        \For{$t=t'+1$ to $T$}
        \State $W \leftarrow \textproc{ComputeImportance}()$ \Comment{See \cref{alg:ema_is}}
        \State $\left\{\tilde{p}_i\right\}_{i=1}^M \leftarrow \textproc{ComputeAdjustedProbabilities}(W)$ \Comment{See \cref{alg:ema_is}}
        \State Sample indices~$\{i_k\}_{k=1}^N$ according to $\left\{\tilde{p}_i\right\}_{i=1}^M$ with replacement
        \For{each~$i_k$}
        \State $g^{t}_{i_k} \leftarrow \nabla_z \tilde{\mathcal{L}}(z;\,i_k)$
        \State $\textproc{UpdateStats}(t, \left\|g^{t}_{i_k}\right\|)$ \Comment{See \cref{alg:ema_is}}
        \EndFor
        \State Estimate $\phi_\mathrm{is} \approx \tr\left(\mathbb{V}_{\mathrm{is}(W)}\left[\nabla_z \tilde{\mathcal{L}}(z)\right]\right)$, $\phi_\mathrm{unif} \approx \tr\left(\mathbb{V}_\mathrm{unif}\left[\nabla_z \tilde{\mathcal{L}}(z)\right]\right)$, and $\phi_\mathrm{ideal} \approx \tr\left(\mathbb{V}_{\mathrm{is}(W^\ast)}\left[\nabla_z \tilde{\mathcal{L}}(z)\right]\right)$ using \cref{eq:trvar_is_logit,eq:trvar_unif_logit,eq:trvar_ideal_is_logit}, with~$\left\{g^{t}_{i_k}\right\}_{k=1}^N$ and $r(i_k, W) = (1/M) / \tilde{p}_{i_k}$. \Comment{See also \cref{prop:tr_cov_est} and \cref{fig:pseudo_code}}
        \State $N_{\mathrm{ems}}
            \leftarrow \frac{\phi_\mathrm{unif}}{\phi_\mathrm{is}} N$
        \State $\epsilon_{\mathrm{ems}} \leftarrow \frac{N_{\mathrm{ems}}}{N} \epsilon_t$
        \State (Optional) $\mathcal{S}(W) \leftarrow \frac{\phi_\mathrm{is} - \phi_\mathrm{ideal}}{\phi_\mathrm{unif} - \phi_\mathrm{ideal}}$
        \State $\hat{g} \leftarrow \nabla_\theta \left(\frac{1}{N}\sum_{k=1}^N r(i_k;W) \mathcal{L}(\theta; i_k)\right)$
        \State $\theta \leftarrow \theta - \epsilon_{\mathrm{ems}} \hat{g}$
        \EndFor
    \end{algorithmic}
\end{algorithm}

\section{Experiments} \label{sec:experiment}
In this section, we present the results of experiments comparing the proposed method with other methods on benchmark datasets.

\subsection{Datasets}
The following three datasets were used in the evaluation:
\emph{FMNIST} \cite{fmnist} consists of $28 \times 28$ grayscale images of clothing items labeled into ten classes. It contains 60000 images for training and 10000 images for testing.
\emph{CINIC-10} \cite{cinic10} is a dataset of $32 \times 32$ RGB images featuring objects such as animals and vehicles, labeled into ten classes. It contains 90000 images for training and 90000 images for testing.
The \emph{ChestX-ray14} dataset \cite{wang2017chestx} contains anonymized $1024 \times 1024$ grayscale chest X-ray images. Labels for 14 types of diseases (e.g., pneumonia) are assigned to each image by processing radiologist reports using natural language processing methods. Since a single X-ray image may be associated with multiple diseases described in the corresponding report, the dataset uses multi-label annotations. Specifically, each image is assigned 14 binary labels, where a value of~1 indicates the presence of a particular disease and~0 indicates its absence. Images with all 14 labels set to~0 correspond to ``No Finding'' (absence of any conditions).
It contains 86524 images for training and 25596 images for testing.

\subsection{Compared methods}
We used the following methods as comparisons, in which we refer to the proposed method as EMAIS (exponential moving average-based importance sampling):
\begin{itemize}
    \item \emph{SGD-Scan, SGD-Uni}: SGD-Scan generates training minibatches via uniform sampling without replacement, whereas SGD-Uni applies uniform sampling with replacement.
    \item \emph{RAIS} \cite{johnson2018training}: In RAIS,~$W$ is estimated using a robust regression model based on the per-sample gradient norm, and training is conducted using importance sampling with this~$W$. The learning rate is also adjusted in accordance with the expected squared norm of the gradient. RAIS also introduces the ``effective iteration number,'' a virtual iteration count used to adjust the learning rate scheduling accordingly.
    \item \emph{Presampling-IS} \cite{katharopoulos2018not}: This method generates training minibatches by first calculating per-sample gradient norms for a large initial minibatch of size~$N_\mathrm{large}$. A subsampling step is then executed using these scores to construct the final minibatch.
    \item \emph{Confidence} \cite{chang2017active}: With this method, the prediction probability~$p_{y_\mathrm{true}}^i$ for the correct class of training data point~$i$ is recorded during training. The average prediction probability, denoted as~$\bar{p}_{y_\mathrm{true}}^i$, is then used to compute each element of~$W$ as~$w_i = 1 - \bar{p}_{y_\mathrm{true}}^i$. Following this study \cite{chang2017active}, instead of directly using~$w_i$, we compute the average of~$w_i$ across all data points and add this value as an offset to each~$w_i$. A simple average is used to estimate the loss gradient instead of using \cref{eq:is_grad}.
    \item \emph{Confidence Variance (ConfVar)} \cite{chang2017active}: ConfVar is a method proposed by Chang et al. \cite{chang2017active} that determines~$w_i$ based on the variance of~$p_{y_\mathrm{true}}^i$ recorded during training. Specifically, ConfVar prioritizes sampling data points the prediction probabilities for the correct class of which exhibit significant fluctuation. As with the Confidence method, ConfVar incorporates an offset by adding the average of~$w_i$ to each~$w_i$, and uses a simple average to estimate the loss gradient.
    \item \emph{Self-paced learning} \cite{kumar2010self}: Self-paced learning, a variant of curriculum learning \cite{bengio2009curriculum}, automatically estimates the difficulty of each data point and selects training data accordingly. This method introduces a hyperparameter~$K_\mathrm{sp}$ such that, at each training iteration, only data points with loss values below~$1/K_\mathrm{sp}$ are used for training. As training progresses and more data points achieve smaller loss values, the proportion of data used for training increases. We use the method proposed by \cite{kumar2010self} to adjust~$K_\mathrm{sp}$. It is important to note that self-paced learning was originally designed for batch learning and cannot be directly applied to minibatch-based DNN training. To address this, we use a straightforward approach: within each minibatch, only data points with loss values below~$1/K_\mathrm{sp}$ are used for training.
    \item \emph{EMAIS (proposed)}: EMAIS uses \Cref{alg:ema_is} to estimate~$W$ with Linear-$\tau$. The learning rate is adjusted during training in accordance with \cref{eq:epsilon_ems}.
\end{itemize}
\paragraph{Details for implementation and settings of each method}
For RAIS, the robust regression model was reused from the authors' implementation \cite{RAIS_code}. All other methods were implemented from scratch. The hyperparameters for RAIS, such as the variables used in regression, were directly adopted from the authors' implementation.
For Presampling-IS, the large initial minibatch size was set to~$N_\mathrm{large} = 1024$. With Self-paced learning, the initial value of~$K_\mathrm{sp}$ was set such that 60\% of the training data were used at the beginning, and~$\mu = 1.2$ was used to increase~$K_\mathrm{sp}$ during training.
Note that the application of the Confidence and ConfVar methods to ChestX-ray14 is not straightforward, as it contains multi-labeled data. Therefore, these methods were excluded from evaluation on ChestX-ray14 in the experiments.

\subsection{DNN model, training, and evaluation settings}  \label{sec:train_setting}
The DNN models and training configurations for each dataset are summarized in \cref{tab:setup}.
Specifically, LeNet5~\cite{lecun1989backpropagation}, ResNet18~\cite{resnet}, and ResNet50~\cite{resnet} are employed for the FMNIST, CINIC-10, and ChestX-ray14 datasets, respectively.
For each dataset, multiple experiments were conducted by varying the total number of training iterations (``Total-iters'' in \cref{tab:setup}). For CINIC-10, two different weight-decay settings were evaluated.
The minibatch size was fixed at~128 as a common setting across all datasets.
For all methods, the learning rate was initialized with the values specified in \cref{tab:setup} and decayed using cosine scheduling, ensuring that it reaches zero at the specified total number of training iterations.
\begin{table}[tbhp]
    \tabcolsep = 7pt
    \caption{DNN model and training setup for each dataset}
    \label{tab:setup}
    \begin{center}
        \scalebox{0.8}{
            \begin{tabular}{l|lllllll}
                \toprule
                Dataset      & Model                                  & Loss & Total-iters              & LR   & WD        & DA        & SD       \\
                \midrule
                FMNIST       & LeNet5 \cite{lecun1989backpropagation} & CE   & 6250/12500/25000         & 0.01 & 1e-3      & --        & --       \\
                CINIC-10     & ResNet18 \cite{resnet}                 & CE   & 17500/35000/70000/140000 & 0.1  & 1e-4/5e-4 & Crop,Flip & --       \\
                ChestX-ray14 & ResNet50 \cite{resnet}                 & BCE  & 17500/35000/70000        & 0.1  & 1e-4      & Crop,Flip & 0.5      \\
                \bottomrule
                \multicolumn{8}{l}{\small{LR: initial learning rate (decayed using cosine scheduling), WD: weight decay, DA: data augmentation,}} \\
                \multicolumn{8}{l}{\small{SD: stochastic depth \cite{stoc_depth}, CE: cross entropy, BCE: binary cross entropy}}
            \end{tabular}
        }
    \end{center}
\end{table}

For each configuration in \cref{tab:setup}, training was conducted five times with different random seeds, and the average and standard deviation of the test accuracy were measured. For FMNIST and CINIC-10, prediction error (lower is better) was used as the evaluation metric. For the multi-label ChestX-ray14 dataset, prediction performance was assessed using mean average precision (mAP), where higher values indicate better performance.

\subsection{Evaluation results}

\subsubsection{Comparison of EMAIS and SGD-Uni with dynamic minibatch size}
To investigate the relationship between EMS, defined in \cref{sec:ems}, and learning with importance sampling, we conducted the following experiment. We first trained a model on FMNIST using EMAIS with a minibatch of size~$N=128$ for a total of 25000 iterations, recorded the value of~$N_\mathrm{ems}$ at each iteration (note that~$N_\mathrm{ems}$ is computed online during EMAIS training). We then trained SGD-Uni by dynamically adjusting the minibatch size at each iteration on the basis of the recorded~$N_\mathrm{ems}$ while adjusting the learning rate accordingly using the method described in \cref{sec:epsilon_ems}.

\Cref{fig:fmnist_sgd_unif_n_ems} presents the results, including the training loss and test error for each model, as well as the transitions of~$N_\mathrm{ems}$ recorded during EMAIS training.
For comparison, the training loss and test error of SGD-Uni and SGD-Scan with a fixed mini-batch size of~$N=128$ are also shown.
The figure shows that both the training loss and test error follow nearly the same trend for EMAIS with a fixed minibatch size of~$N=128$ and SGD-Uni with a dynamic minibatch size of~$N=N_\mathrm{ems}$. This suggests that training with importance sampling, given an appropriate~$W$, effectively mimics the effect of increasing the minibatch size. Moreover,~$N_\mathrm{ems}$ serves as an effective estimator of the extent to which the minibatch size is adjusted by importance sampling.
The figure also illustrates the superior performance of EMAIS compared with SGD-Uni and SGD-Scan under the condition that the minibatch size is fixed at~$N=128$.
As described in \cref{sec:emais}, the reason~$N_\mathrm{ems}$ remains flat in the early stages of training is that EMAIS estimates the initial values of the moving statistics during this period, without applying importance sampling.
\begin{figure}[tbhp]
    \centering
    \includegraphics[width=1.0\linewidth,clip]{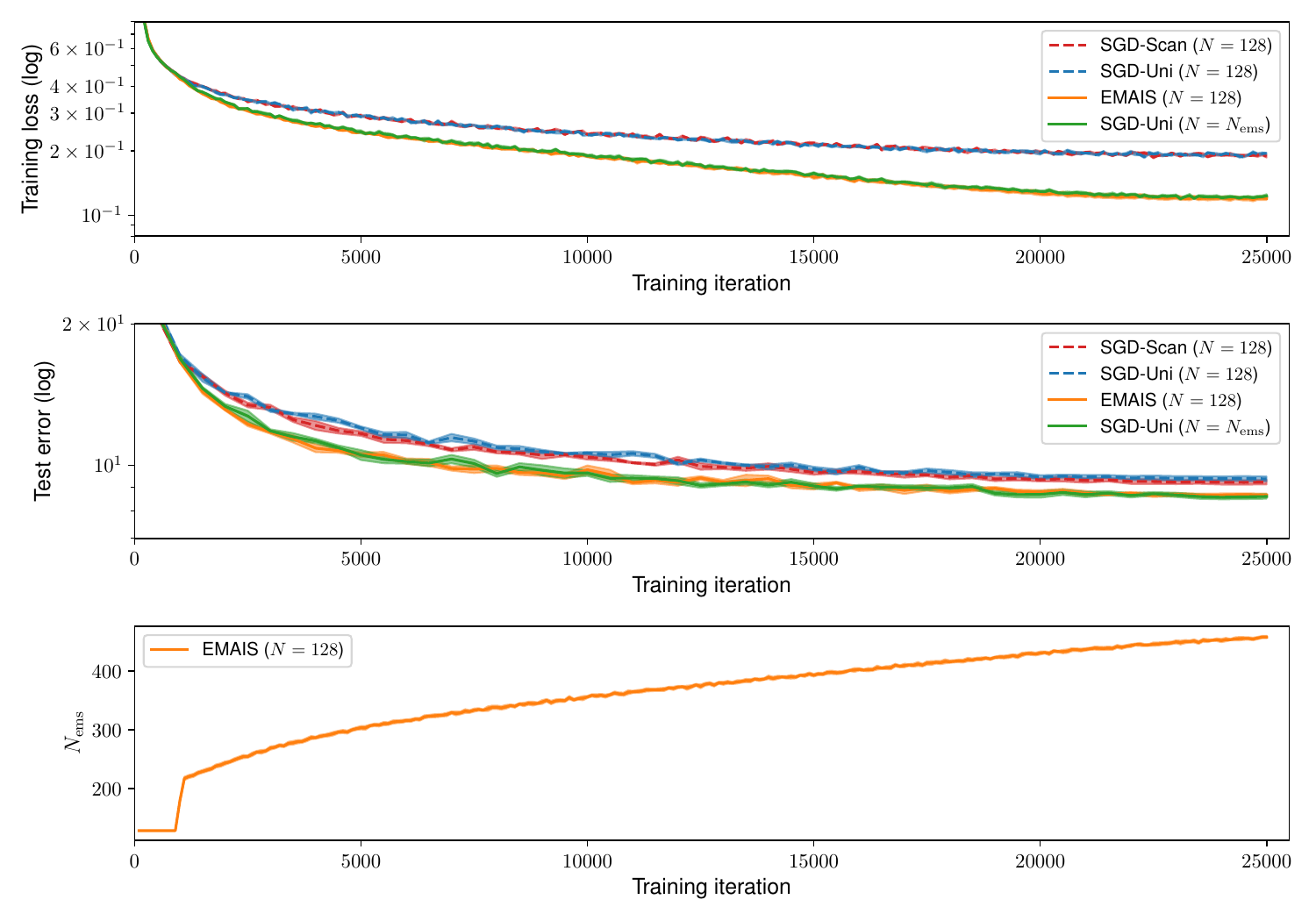}
    \caption{Training loss values (top), test error (middle), and EMS transitions (bottom) for FMNIST. Upper two plots compare three methods: SGD-Uni with fixed minibatch size of~$N=128$, EMAIS with a minibatch size of~$N=128$, and SGD-Uni with dynamic minibatch size~$N=N_\mathrm{ems}$, where~$N_\mathrm{ems}$ is reused from the values obtained during EMAIS training with~$N=128$ (shown in lower plot).}
    \label{fig:fmnist_sgd_unif_n_ems}
\end{figure}

\subsubsection{Comparison of prediction accuracy}
The evaluation results of prediction accuracy for FMNIST, CINIC-10, and ChestX-ray14 are summarized in \cref{tab:res_fmnist}, \cref{tab:res_cinic10}, and \cref{tab:res_nih}, respectively. Each table shows the mean values of the evaluation metrics, with the numbers in parentheses indicating the standard deviations.
In each table, the best results for each setting are highlighted in bold. The results with mean values that fall within the standard deviation of the best result are also highlighted in bold.

From the tables, it is clear that EMAIS consistently achieved top performance across all datasets and settings, demonstrating its effectiveness. RAIS achieved the second-best accuracy and outperformed the standard SGD-Scan method with stable and high accuracy. These results suggest that importance sampling based on gradient norms is effective for DNN training.
In contrast, Presampling-IS exhibited instability in certain settings, particularly on CINIC-10. Confidence, ConfVar, and Self-paced Learning did not consistently yield better results than SGD-Scan. While the performance difference between SGD-Scan and SGD-Uni was not significant, SGD-Scan achieved slightly higher accuracy, especially on CINIC-10.

\begin{table}[tbhp]
    \centering
    \tabcolsep = 5pt
    \caption{Prediction error on FMNIST test set ($\downarrow$)}
    \label{tab:res_fmnist}
    \scalebox{0.67}{
        \begin{tabular}{r|rrrrrrrr}
            \toprule
            Total-iters & \multicolumn{1}{c}{SGD-Scan} & \multicolumn{1}{c}{SGD-Uni} & \multicolumn{1}{c}{RAIS}   & \multicolumn{1}{c}{Presampling-IS} & \multicolumn{1}{c}{Confidence} & \multicolumn{1}{c}{ConfVar} & \multicolumn{1}{c}{Self-paced} & \multicolumn{1}{c}{EMAIS}   \\
            \midrule
            6250        & $12.02 (\pm 0.45)$           & $12.05 (\pm 0.07)$          & $10.90 (\pm 0.26)$         & $11.50 (\pm 0.13)$                 & $10.78 (\pm 0.31)$             & $12.30 (\pm 0.20)$          & $22.54 (\pm 3.34)$             & $\textbf{10.52} (\pm 0.10)$ \\
            12500       & $10.32 (\pm 0.11)$           & $10.30 (\pm 0.19)$          & $9.59 (\pm 0.26)$          & $10.22 (\pm 0.23)$                 & $9.35 (\pm 0.33)$              & $10.28 (\pm 0.16)$          & $13.24 (\pm 0.92)$             & $\textbf{9.05} (\pm 0.22)$  \\
            25000       & $9.21 (\pm 0.23)$            & $9.37 (\pm 0.25)$           & $\textbf{8.84} (\pm 0.18)$ & $\textbf{9.04} (\pm 0.18)$         & $9.11 (\pm 0.25)$              & $9.09 (\pm 0.25)$           & $9.72 (\pm 0.26)$              & $\textbf{8.83} (\pm 0.22)$  \\
            \bottomrule
        \end{tabular}
    }
\end{table}

\begin{table}[tbhp]
    \centering
    \tabcolsep = 5pt
    \caption{Prediction error on CINIC-10 test set ($\downarrow$)}
    \label{tab:res_cinic10}
    \scalebox{0.64}{
        \begin{tabular}{lr|rrrrrrrr}
            \toprule
            WD   & Total-iters & \multicolumn{1}{c}{SGD-Scan} & \multicolumn{1}{c}{SGD-Uni} & \multicolumn{1}{c}{RAIS}    & \multicolumn{1}{c}{Presampling-IS} & \multicolumn{1}{c}{Confidence} & \multicolumn{1}{c}{ConfVar} & \multicolumn{1}{c}{Self-paced} & \multicolumn{1}{c}{EMAIS}   \\
            \midrule
            1e-4 & 17500       & $16.69 (\pm 0.09)$           & $16.89 (\pm 0.05)$          & $16.66 (\pm 0.13)$          & $16.76 (\pm 0.13)$                 & $17.08 (\pm 0.19)$             & $16.92 (\pm 0.21)$          & $17.86 (\pm 0.21)$             & $\textbf{16.42} (\pm 0.15)$ \\
                 & 35000       & $15.25 (\pm 0.15)$           & $15.52 (\pm 0.13)$          & $\textbf{14.98} (\pm 0.20)$ & $16.00 (\pm 1.75)$                 & $15.39 (\pm 0.09)$             & $15.57 (\pm 0.09)$          & $15.77 (\pm 0.10)$             & $\textbf{14.88} (\pm 0.11)$ \\
                 & 70000       & $14.14 (\pm 0.08)$           & $14.32 (\pm 0.15)$          & $14.02 (\pm 0.09)$          & $14.76 (\pm 0.50)$                 & $14.44 (\pm 0.07)$             & $14.32 (\pm 0.16)$          & $14.56 (\pm 0.13)$             & $\textbf{13.86} (\pm 0.09)$ \\
                 & 140000      & $13.49 (\pm 0.08)$           & $13.55 (\pm 0.11)$          & $13.31 (\pm 0.09)$          & $13.80 (\pm 0.10)$                 & $13.79 (\pm 0.13)$             & $13.52 (\pm 0.11)$          & $13.66 (\pm 0.09)$             & $\textbf{13.13} (\pm 0.03)$ \\
            \addlinespace
            1e-5 & 17500       & $\textbf{15.27} (\pm 0.06)$  & $15.52 (\pm 0.13)$          & $\textbf{15.23} (\pm 0.03)$ & $\textbf{15.26} (\pm 0.11)$        & $15.61 (\pm 0.11)$             & $15.51 (\pm 0.07)$          & $16.18 (\pm 0.25)$             & $\textbf{15.19} (\pm 0.15)$ \\
                 & 35000       & $13.99 (\pm 0.11)$           & $14.21 (\pm 0.08)$          & $\textbf{13.83} (\pm 0.08)$ & $14.64 (\pm 1.07)$                 & $14.18 (\pm 0.13)$             & $14.05 (\pm 0.08)$          & $14.55 (\pm 0.16)$             & $\textbf{13.82} (\pm 0.14)$ \\
                 & 70000       & $13.15 (\pm 0.05)$           & $13.33 (\pm 0.07)$          & $13.06 (\pm 0.10)$          & $16.28 (\pm 2.85)$                 & $13.26 (\pm 0.06)$             & $13.36 (\pm 0.05)$          & $13.40 (\pm 0.05)$             & $\textbf{12.99} (\pm 0.06)$ \\
                 & 140000      & $\textbf{12.67} (\pm 0.09)$  & $12.86 (\pm 0.04)$          & $\textbf{12.75} (\pm 0.07)$ & $16.82 (\pm 3.19)$                 & $13.03 (\pm 0.06)$             & $12.89 (\pm 0.08)$          & $12.84 (\pm 0.07)$             & $\textbf{12.70} (\pm 0.07)$ \\
            \bottomrule
        \end{tabular}
    }
\end{table}

\begin{table}[tbhp]
    \centering
    \tabcolsep = 5pt
    \caption{Mean average precision (mAP) on ChestX-ray14 test set ($\uparrow$)}
    \label{tab:res_nih}
    \scalebox{0.7}{
        \begin{tabular}{r|rrrrrr}
            \toprule
            Total-iters & \multicolumn{1}{c}{SGD-Scan} & \multicolumn{1}{c}{SGD-Uni} & \multicolumn{1}{c}{RAIS} & \multicolumn{1}{c}{Presampling-IS} & \multicolumn{1}{c}{Self-paced} & \multicolumn{1}{c}{EMAIS}   \\
            \midrule
            17500       & $18.66 (\pm 0.16)$           & $18.71 (\pm 0.03)$          & $20.30 (\pm 0.10)$       & $19.69 (\pm 0.14)$                 & $17.08 (\pm 0.71)$             & $\textbf{20.76} (\pm 0.10)$ \\
            35000       & $22.07 (\pm 0.10)$           & $21.92 (\pm 0.07)$          & $23.52 (\pm 0.17)$       & $23.43 (\pm 0.05)$                 & $21.35 (\pm 0.23)$             & $\textbf{23.72} (\pm 0.16)$ \\
            70000       & $24.69 (\pm 0.12)$           & $24.72 (\pm 0.14)$          & $25.39 (\pm 0.33)$       & $25.35 (\pm 0.04)$                 & $24.35 (\pm 0.23)$             & $\textbf{25.74} (\pm 0.07)$ \\
            \bottomrule
        \end{tabular}
    }
\end{table}

\subsubsection{Comparison of training time}
The training times for DNN models on FMNIST, CINIC-10, and ChestX-ray14 are summarized in~\cref{tab:per_iter_time}.\footnote{The experiments were conducted on a machine equipped with an $\text{Intel}^\text{\textregistered}$ $\text{Xeon}^\text{\textregistered}$ Gold 6134 CPU and an $\text{NVIDIA}^\text{\textregistered}$ $\text{Tesla}^\text{\textregistered}$ V100 GPU.}
The table reports the average training time per iteration for each dataset.\footnote{For detailed training times under each training configuration, please refer to~\cref{apdx:training_time}.}
The time required for DNN training can vary due to factors beyond the computational complexity of the learning algorithm, such as implementation details or the impact of other computational tasks when using shared computing resources. Therefore, while the current results are not suitable for a detailed comparison of computational complexity, they do provide an approximate understanding of the overall trends in training time.
\begin{table}[tbhp]
    \centering
    \tabcolsep = 5pt
    \caption{Per-iteration average training time (in seconds)}
    \label{tab:per_iter_time}
    \scalebox{0.7}{
        \begin{tabular}{r|llllllll}
            \toprule
            Dataset      & \multicolumn{1}{c}{SGD-scan} & \multicolumn{1}{c}{SGD-unif} & \multicolumn{1}{c}{RAIS} & \multicolumn{1}{c}{Presampling-IS} & \multicolumn{1}{c}{Confidence} & \multicolumn{1}{c}{ConfVar} & \multicolumn{1}{c}{Self-paced} & \multicolumn{1}{c}{EMAIS} \\
            \midrule
            FMNIST       & $8.60 \times 10^{-3}$        & $8.62 \times 10^{-3}$        & $1.08 \times 10^{-2}$    & $5.22 \times 10^{-2}$              & $6.21 \times 10^{-3}$          & $9.13 \times 10^{-3}$       & $8.66 \times 10^{-3}$          & $7.32 \times 10^{-3}$     \\
            CINIC-10     & $6.62 \times 10^{-2}$        & $6.63 \times 10^{-2}$        & $6.63 \times 10^{-2}$    & $1.98 \times 10^{-1}$              & $6.35 \times 10^{-2}$          & $6.70 \times 10^{-2}$       & $6.50 \times 10^{-2}$          & $6.67 \times 10^{-2}$     \\
            ChestX-ray14 & $4.45 \times 10^{-1}$        & $4.45 \times 10^{-1}$        & $4.50 \times 10^{-1}$    & $1.36 \times 10^{0}$               & \multicolumn{1}{c}{--}         & \multicolumn{1}{c}{--}      & $4.45 \times 10^{-1}$          & $4.46 \times 10^{-1}$     \\
            \bottomrule
        \end{tabular}
    }
\end{table}

As shown in \cref{tab:per_iter_time}, the per-iteration training time differs across datasets, which is partly due to the size of the DNN models employed for each dataset.
When focusing on individual datasets, we observe no significant differences in training time among the methods, except for Presampling-IS.
For Presampling-IS, the computational overhead was notably large because it requires performing calculations on large minibatches beforehand, which resulted in the observed differences in training time.
Taking into account the results from the previous section and these findings, EMAIS can train more accurate models with approximately the same computational time as the widely used SGD-Scan.

\subsection{Evaluation with the Adam optimizer} \label{sec:exp_adam}
The above evaluation was conducted using SGD as the optimizer. In this section, we present the evaluation results obtained using the Adam optimizer. The experimental settings largely follow those in \cref{tab:setup}, except for the learning rate configuration, which was modified as follows (consistently applied across all datasets): the initial learning rate was set to 0.001 and decayed according to cosine scheduling.
We denote the method that samples minibatches using uniform sampling as \emph{Adam-Uni}, and the one that uses uniform sampling without resampling as \emph{Adam-Scan}. In addition to these, we evaluate RAIS and EMAIS using the Adam optimizer, referred to as \emph{RAIS-Adam} and \emph{EMAIS-Adam}, respectively.
For EMAIS-Adam, we applied the learning rate adjustment based on~$N_\mathrm{ems}$ for Adam, which is denoted as~$\epsilon_\mathrm{ems}^\mathrm{adam}$ in \cref{sec:epsilon_ems}.

The results are presented in \cref{tab:adam}, showing the mean values of the evaluation metrics, with the numbers in parentheses indicating the standard deviations over five trials. Bold values indicate either the best result or those whose mean falls within the standard deviation of the best result.
As shown in \cref{tab:adam}, EMAIS-Adam consistently outperforms the uniform sampling-based methods, Adam-Scan and Adam-Uni, across all datasets and training configurations.
These results demonstrate the effectiveness of importance sampling over uniform sampling, even when combined with the Adam optimizer.
When compared to RAIS-Adam, EMAIS-Adam shows comparable performance overall. In several configurations, such as those for CINIC-10, the two methods yield nearly identical results.
However, for ChestX-ray14, EMAIS consistently outperforms RAIS across all training durations.
This underscores the effectiveness of our method not only on standard classification tasks, but also on more complex problems such as medical image analysis with imbalanced multi-label data.
\begin{table}[tbhp]
    \centering
    \tabcolsep = 5pt
    \caption{Evaluation results on FMNIST, CINIC-10, and ChestX-ray14 using the Adam optimizer. For FMNIST and CINIC-10, the values represent test error (\%, lower is better), while for ChestX-ray14, the values indicate mean average precision (mAP; higher is better).}
    \label{tab:adam}
    \scalebox{0.7}{
        \begin{tabular}{lllrrrr}
            \toprule
            Dataset                   & WD   & Total-iters & \multicolumn{1}{c}{Adam-Scan} & \multicolumn{1}{c}{Adam-Uni} & \multicolumn{1}{c}{RAIS-Adam} & \multicolumn{1}{c}{EMAIS-Adam} \\
            \midrule
            FMNIST ($\downarrow$)     & 1e-3 & 6250        & $10.99 (\pm 0.31)$            & $10.96 (\pm 0.14)$           & $10.20 (\pm 0.09)        $    & $\textbf{10.00} (\pm 0.13)$    \\
                                      &      & 12500       & $9.75 (\pm 0.10) $            & $9.81 (\pm 0.17) $           & $9.18 (\pm 0.22)         $    & $\textbf{8.92} (\pm 0.07) $    \\
                                      &      & 25000       & $9.09 (\pm 0.16) $            & $9.00 (\pm 0.25) $           & $\textbf{8.94} (\pm 0.21)$    & $\textbf{8.82} (\pm 0.17) $    \\
            \addlinespace
            CINIC-10 ($\downarrow$)   & 1e-4 & 17500       & $17.46 (\pm 0.08)$            & $17.90 (\pm 0.11)$           & $\textbf{17.32} (\pm 0.11)$   & $\textbf{17.43} (\pm 0.07)$    \\
                                      &      & 35000       & $17.02 (\pm 0.15)$            & $17.01 (\pm 0.11)$           & $\textbf{16.78} (\pm 0.13)$   & $\textbf{16.74} (\pm 0.09)$    \\
                                      &      & 70000       & $16.42 (\pm 0.14)$            & $16.56 (\pm 0.09)$           & $\textbf{16.12} (\pm 0.06)$   & $\textbf{16.13} (\pm 0.10)$    \\
                                      & 5e-5 & 17500       & $18.86 (\pm 0.23)$            & $18.87 (\pm 0.15)$           & $\textbf{18.36} (\pm 0.07)$   & $18.45 (\pm 0.16)         $    \\
                                      &      & 35000       & $17.73 (\pm 0.10)$            & $17.72 (\pm 0.16)$           & $\textbf{17.55} (\pm 0.13)$   & $\textbf{17.49} (\pm 0.15)$    \\
                                      &      & 70000       & $17.28 (\pm 0.22)$            & $17.40 (\pm 0.13)$           & $\textbf{17.18} (\pm 0.09)$   & $\textbf{17.23} (\pm 0.11)$    \\
            \addlinespace
            ChestX-ray14 ($\uparrow$) & 1e-4 & 17500       & $15.14 (\pm 0.21)$            & $15.19 (\pm 0.15)$           & $15.47 (\pm 0.24)$            & $\textbf{15.98} (\pm 0.10)$    \\
                                      &      & 35000       & $16.90 (\pm 0.13)$            & $17.03 (\pm 0.08)$           & $17.10 (\pm 0.24)$            & $\textbf{17.59} (\pm 0.14)$    \\
                                      &      & 70000       & $18.59 (\pm 0.19)$            & $18.46 (\pm 0.14)$           & $18.38 (\pm 0.23)$            & $\textbf{19.08} (\pm 0.14)$    \\
            \bottomrule
        \end{tabular}
    }
\end{table}

\subsection{Assessment of estimation with logit gradients} \label{sec:assess_logit_grad_est}
This section summarizes the results of evaluating the estimation accuracy of each metric when using gradients with respect to the logits, instead of the full parameter gradients discussed in \cref{sec:approx_by_logit}.
In this analysis, we used LeNet5 trained on FMNIST as a representative example, following the training configuration with 25000 total iterations as shown in~\cref{tab:setup}.
Although LeNet5 is a relatively small convolutional neural network, it contains over 60000 trainable parameters, making its parameter space substantially higher-dimensional than that of the logits, which have a dimensionality of~10 in this case.

We first present the results of comparing the per-sample gradient norms computed using full parameter gradients with those computed using logit gradients.
For models at training iterations 5000 and 25000, we computed both~$\left\|\nabla_\theta \mathcal{L}(\theta; i)\right\|$ and~$\left\|\nabla_z \tilde{\mathcal{L}}(z; i)\right\|$ for each of 500 data points~$i$ randomly sampled from the training dataset.
The results are presented in \cref{fig:full_logit_grad}.
Each scatter plot compares the logit-based gradient norm (horizontal axis) with the corresponding full parameter gradient norm (vertical axis) across data points.
As shown in the figure, although~$\left\|\nabla_\theta \mathcal{L}(\theta; i)\right\|$ and~$\left\|\nabla_z \tilde{\mathcal{L}}(z; i)\right\|$ differ in scale, they exhibit a clear proportional relationship. This observation is consistent with previous findings reported in~\cite{johnson2018training}.
\begin{figure}[tbhp]
    \centering
    \subfloat[For the model at iteration 5000]{\label{fig:full_logit_grad_5000}\includegraphics[width=0.35\linewidth,clip]{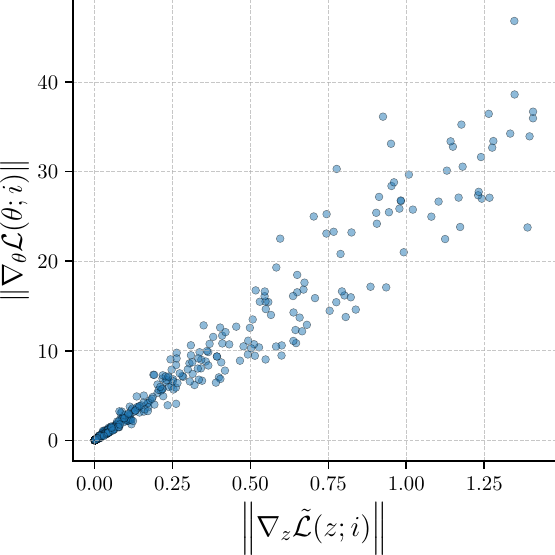}}
    \hspace{0.05\linewidth}
    \subfloat[For the model at iteration 25000]{\label{fig:full_logit_grad_25000}\includegraphics[width=0.35\linewidth,clip]{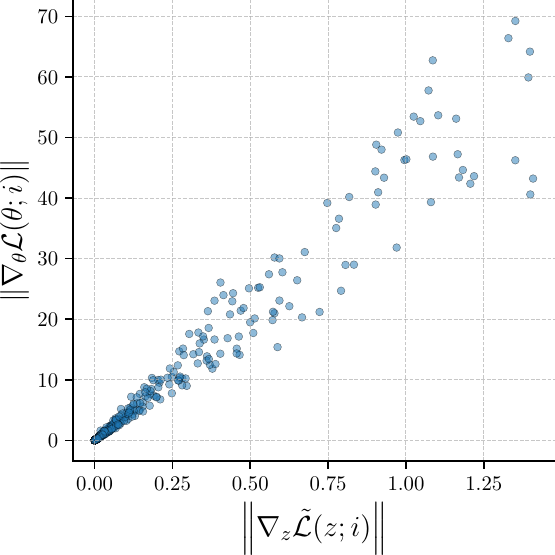}}
    \caption{Relationship between the full gradient and logit gradient norms for models at different training iterations.}
    \label{fig:full_logit_grad}
\end{figure}

Next, we evaluated the accuracy of trace estimates of the gradient variance based on the logit-gradient approximation, as discussed in~\cref{sec:approx_by_logit}, using models at training iterations 5000 and 25000.
For each model, we first computed the optimal importance weights~$W^\ast$, and then generated multiple variants of importance weight vectors~$W$ by adding controlled noise to~$W^\ast$. Using each~$W$, we constructed a single minibatch of size 128 via importance sampling.
For every data point~$i$ in the sampled minibatch, we computed both~$\nabla_\theta \mathcal{L}(\theta; i)$ and~$\nabla_z \tilde{\mathcal{L}}(z; i)$. These values were then used to estimate the trace of the gradient variance in accordance with \cref{prop:tr_cov_est} and \cref{eq:trvar_is_logit,eq:trvar_unif_logit,eq:trvar_ideal_is_logit}.
The results are presented in \cref{fig:full_logit_trvar}.
Each scatter plot compares the trace estimates of the gradient variance computed using logit gradients (horizontal axis) with those computed using full parameter gradients (vertical axis), across minibatches generated with different~$W$.
As shown in the figure, the trace estimates of the gradient variance computed using logit gradients exhibit a clear linear relationship with those computed using full parameter gradients, although the two quantities differ substantially in scale.
For reference, the fitted regression lines are shown in red in each plot.
While some variation is observed across models, both the estimated regression coefficients and intercepts remain within a similar order of magnitude.
\begin{figure}[htbp]
    \centering
    \subfloat[EMAIS (at 5000 iterations)\label{fig:tv_is5000}]{
        \includegraphics[width=0.3\linewidth]{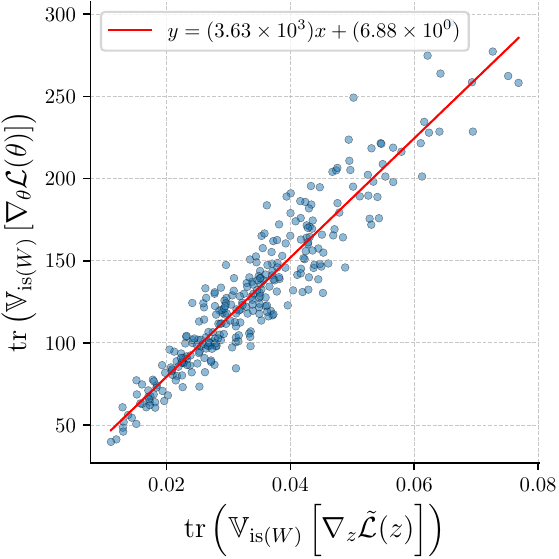}
    }\hfill
    \subfloat[Uniform (at 5000 iterations)\label{fig:tv_unif5000}]{
        \includegraphics[width=0.3\linewidth]{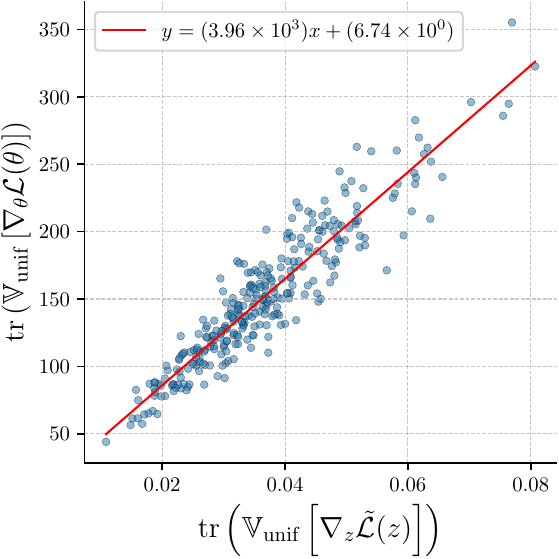}
    }\hfill
    \subfloat[Optimal IS (at 5000 iterations)\label{fig:tv_opt5000}]{
        \includegraphics[width=0.3\linewidth]{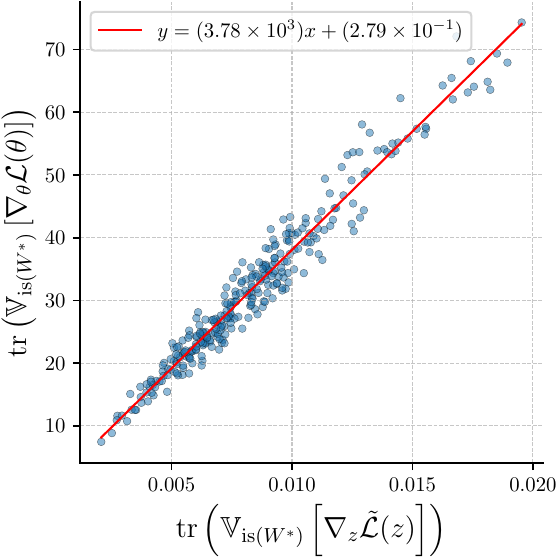}
    }\\
    \subfloat[EMAIS (at 25000 iterations)\label{fig:tv_is25000}]{
        \includegraphics[width=0.3\linewidth]{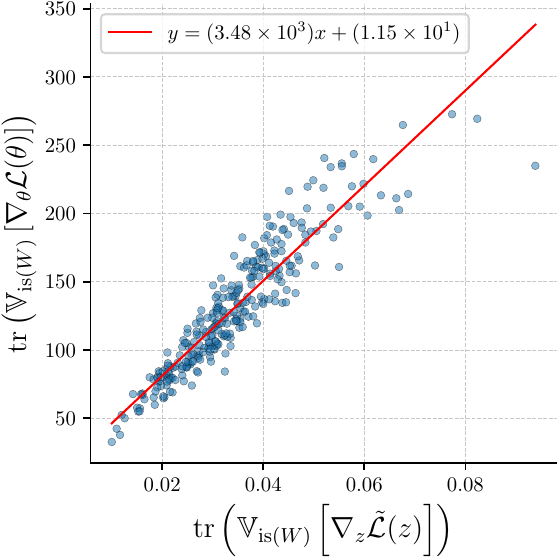}
    }\hfill
    \subfloat[Uniform (at 25000 iterations)\label{fig:tv_unif25000}]{
        \includegraphics[width=0.3\linewidth]{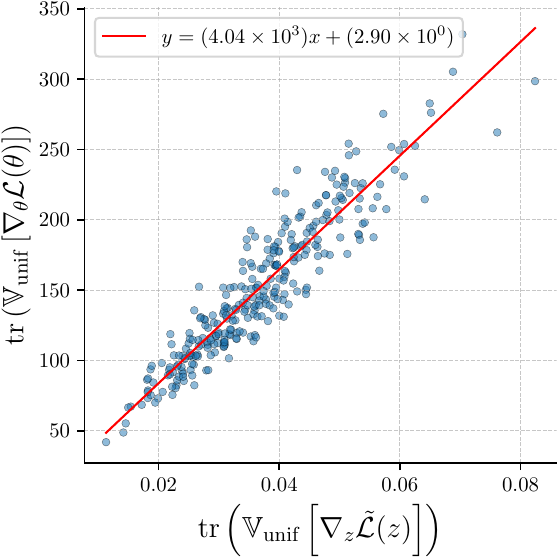}
    }\hfill
    \subfloat[Optimal IS (at 25000 iterations)\label{fig:tv_opt25000}]{
        \includegraphics[width=0.3\linewidth]{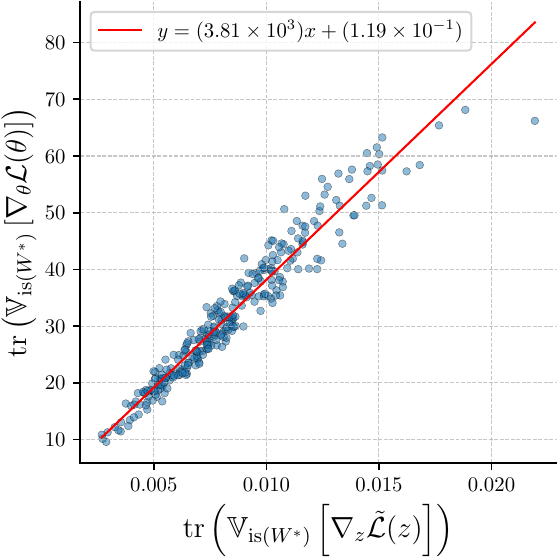}
    }
    \caption{%
        Comparison of trace variance estimates in \cref{prop:tr_cov_est}, computed using either full parameter gradients (plotted on the $y$-axis) or logit gradients (on the $x$-axis), as discussed in \cref{sec:approx_by_logit}, for models at training iterations 5000 and 25000. The red lines represent fitted regression lines.%
    }
    \label{fig:full_logit_trvar}
\end{figure}

Finally, we compared the estimates of~$N_{\mathrm{ems}}$ and~$\mathcal{S}(W)$ obtained using logit gradients and full parameter gradients, respectively.
For each minibatch generated by the procedure described above, we estimated the trace of the gradient variance and subsequently computed~$N_{\mathrm{ems}}$ and~$\mathcal{S}(W)$ based on these trace estimates.
The results are shown in~\cref{fig:full_logit_nems_score}.
Each scatter plot compares the estimated scores, either~$N_{\mathrm{ems}}$ or~$\mathcal{S}(W)$, computed using logit gradients (horizontal axis) with those computed using full parameter gradients (vertical axis), across the generated minibatches.
As shown in the figure, compared to estimates based on full parameter gradients, those obtained using logit gradients tend to slightly overestimate~$N_{\mathrm{ems}}$ and slightly underestimate~$\mathcal{S}(W)$.
Nevertheless, the two sets of estimates exhibit a strong linear relationship across all settings, indicating that the logit-based approximation can serve as a reliable substitute.
Notably, in contrast to gradient norms or trace variance estimates, the values of~$N_{\mathrm{ems}}$ and~$\mathcal{S}(W)$ computed using logit gradients remain within the same order of magnitude as those obtained from full gradients. This property is especially beneficial when absolute magnitudes influence downstream decisions, such as learning rate adjustment.
As discussed in~\cref{sec:approx_by_logit}, this consistency can be attributed to the fact that both~$N_{\mathrm{ems}}$ and~$\mathcal{S}(W)$ are computed based on the ratio (or the ratio of differences) of trace variance estimates, which inherently normalizes the scale differences between full and logit gradients.
\begin{figure}[htbp]
    \centering
    \subfloat[$N_{\mathrm{ems}}$ (at 5000 iterations)\label{fig:ems5000}]{
        \includegraphics[width=0.35\linewidth]{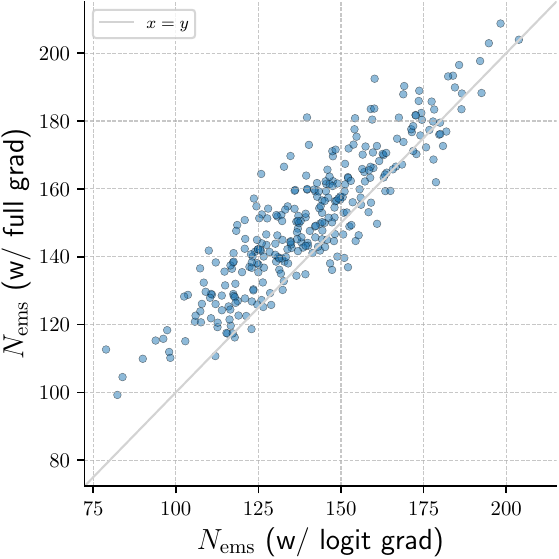}
    }
    \hspace{0.05\linewidth}
    \subfloat[$N_{\mathrm{ems}}$ (at 25000 iterations)\label{fig:ems25000}]{
        \includegraphics[width=0.35\linewidth]{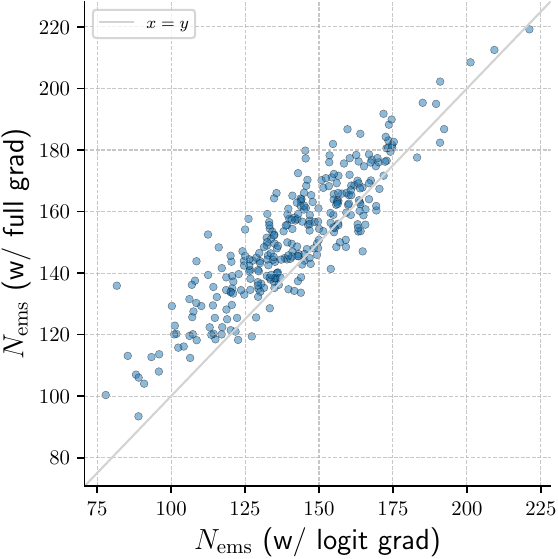}
    } \\
    \subfloat[$\mathcal{S}(W)$ (at 5000 iterations)\label{fig:score5000}]{
        \includegraphics[width=0.35\linewidth]{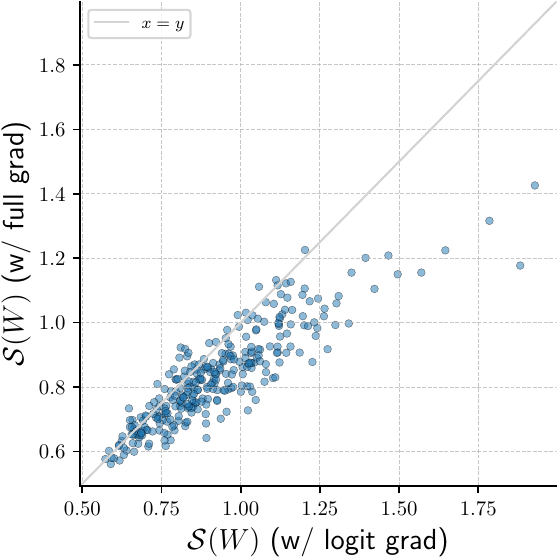}
    }
    \hspace{0.05\linewidth}
    \subfloat[$\mathcal{S}(W)$ (at 25000 iterations)\label{fig:score25000}]{
        \includegraphics[width=0.35\linewidth]{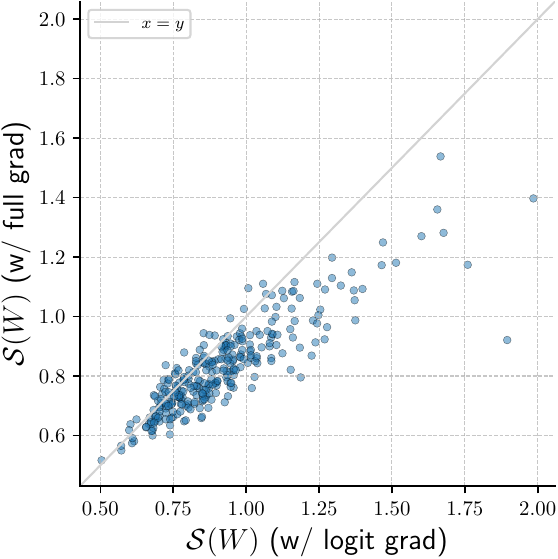}
    }
    \caption{%
        Scatter plots comparing estimates based on full parameter gradients versus logit-based approximations for (a, b) the effective minibatch size~$N_{\mathrm{ems}}$ and (c, d) the efficiency metric~$\mathcal{S}(W)$, for models at training iterations 5000 and 25000.%
    }
    \label{fig:full_logit_nems_score}
\end{figure}

\subsection{Ablation study of learning rate adjustment}
We report the results of an ablation study aimed at demonstrating the effectiveness of the learning rate adjustment proposed in \cref{sec:epsilon_ems}.
Using the FMNIST configuration with a total of~25000 iterations from \cref{tab:setup}, we trained the model with EMAIS, both with and without the learning rate adjustment.
The results are shown in \cref{fig:fmnist_lr_adjust_ablation}, including the training loss, test error, effective minibatch size~$N_{\mathrm{ems}}$, and the learning rate~$\tilde{\epsilon}$ actually applied during training.
For EMAIS \emph{with} learning rate adjustment, $\tilde{\epsilon} = \epsilon_{\mathrm{ems}}$ is computed as in \cref{eq:epsilon_ems}.
For EMAIS \emph{without} adjustment, $\tilde{\epsilon} = \epsilon$, which corresponds to the original learning rate with cosine decay.
The figure also includes the scaled learning rate, defined as $\frac{N}{N_{\mathrm{ems}}} \tilde{\epsilon}$, which accounts for the virtual increase in minibatch size from~$N$ to~$N_{\mathrm{ems}}$.
From the figure, we observe that, under the learning-rate-adjustment setting, the learning rate is scaled up with increasing~$N_{ems}$, such that the scaled learning rate follows the cosine decay schedule. By contrast, in the absence of learning-rate adjustment, although the actual learning rate used during training follows cosine decay, the scaled learning rate effectively applies a smaller rate than the decay schedule. Furthermore, this reduced scaled learning rate slows the decrease of the training loss and thereby degrades the test error reduction rate.
\begin{figure}[tbhp]
    \centering
    \includegraphics[width=0.9\linewidth,clip]{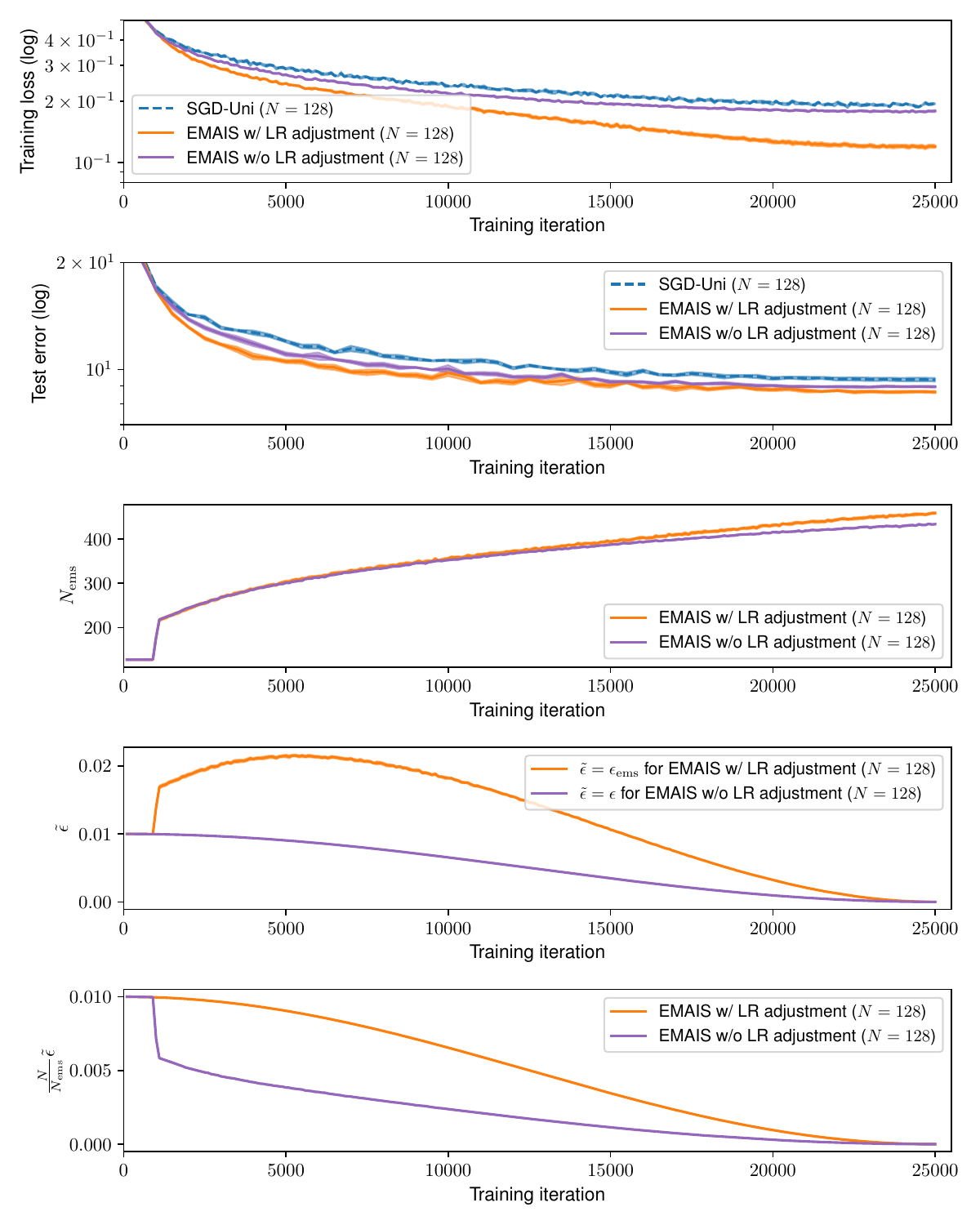}
    \caption{
        Ablation study of learning rate adjustment on FMNIST.
        From top to bottom, the panels show:
        (1) training loss (log scale);
        (2) test error (log scale);
        (3) effective minibatch size~$N_{\mathrm{ems}}$;
        (4) learning rate~$\tilde{\epsilon}$ actually applied during training, which is given by~$\tilde{\epsilon}= \epsilon_\mathrm{ems}$ for EMAIS with learning rate adjustment, and~$\tilde{\epsilon}= \epsilon$ for EMAIS without learning rate adjustment;
        and
        (5) scaled learning rate that accounts for the virtual increase in minibatch size from~$N$ to~$N_\mathrm{ems}$, given by~$\frac{N}{N_\mathrm{ems}} \tilde{\epsilon}$.
    }
    \label{fig:fmnist_lr_adjust_ablation}
\end{figure}

\subsection{Effect of additional updates to internal states in EMAIS}
In the proposed method, the importance sampling weights are computed based on a moving average of per-sample gradient norms, which are maintained as internal states. However, these internal states are updated only for the samples included in the mini-batches during training. In this study, we experimentally investigate whether training can be accelerated by periodically computing the gradient norms of samples that are not included in the current mini-batch and updating their corresponding importance weights.
As representative examples, we used the FMNIST and CINIC-10 configurations with total-iters~$25000$ and~$70000$, respectively, as listed in \cref{tab:setup}.

In addition to the internal state updates based on mini-batches, we performed supplementary updates to the internal states using the following procedure. For FMNIST, every~$500$ training iterations, and for CINIC-10, every~$1000$ iterations, we froze the model parameters and computed the per-sample gradient norms for the entire training set. These values were then used to invoke the \textproc{UpdateStats} procedure from \cref{alg:ema_is} to update the internal states. Note that these gradient computations were not used to update the model parameters.
A comparison of the training loss, test error, and the efficiency score~$\mathcal{S}(W)$ is presented in \cref{fig:additional_state_updates}.
As shown in the figure, the efficiency score~$\mathcal{S}(W)$ tends to be slightly lower when additional state updates are applied, particularly during the early stages of training, compared to the original EMAIS. However, the training loss and test error remain largely unchanged.
These findings suggest that, in the proposed algorithm, estimating importance sampling weights solely based on mini-batches during training---which entails lower computational overhead---has only a limited effect on overall training performance.
\begin{figure}[tbhp]
    \centering
    \subfloat[FMNIST]{\label{fig:additional_state_updates:fmnist}\includegraphics[width=0.5\linewidth,clip]{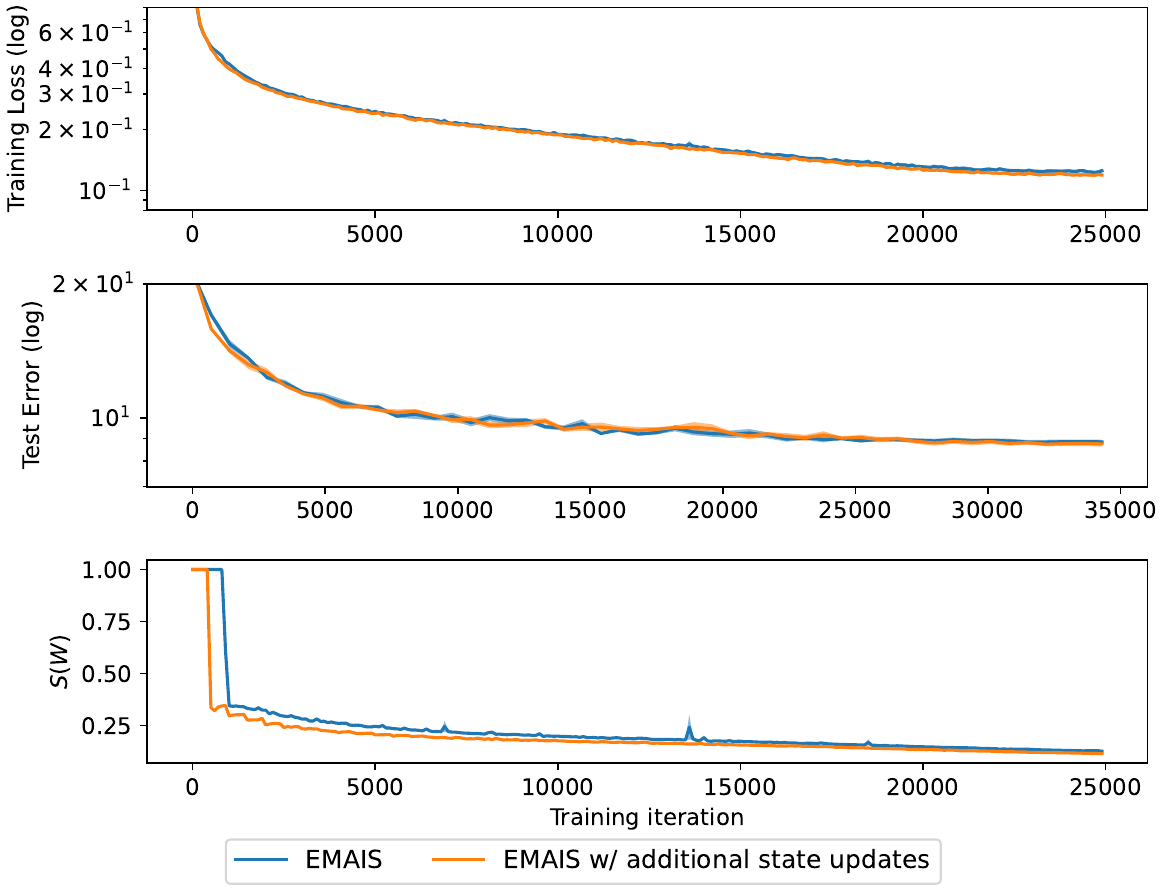}}
    \subfloat[CINIC-10]{\label{fig:additional_state_updates:cinic10}\includegraphics[width=0.5\linewidth,clip]{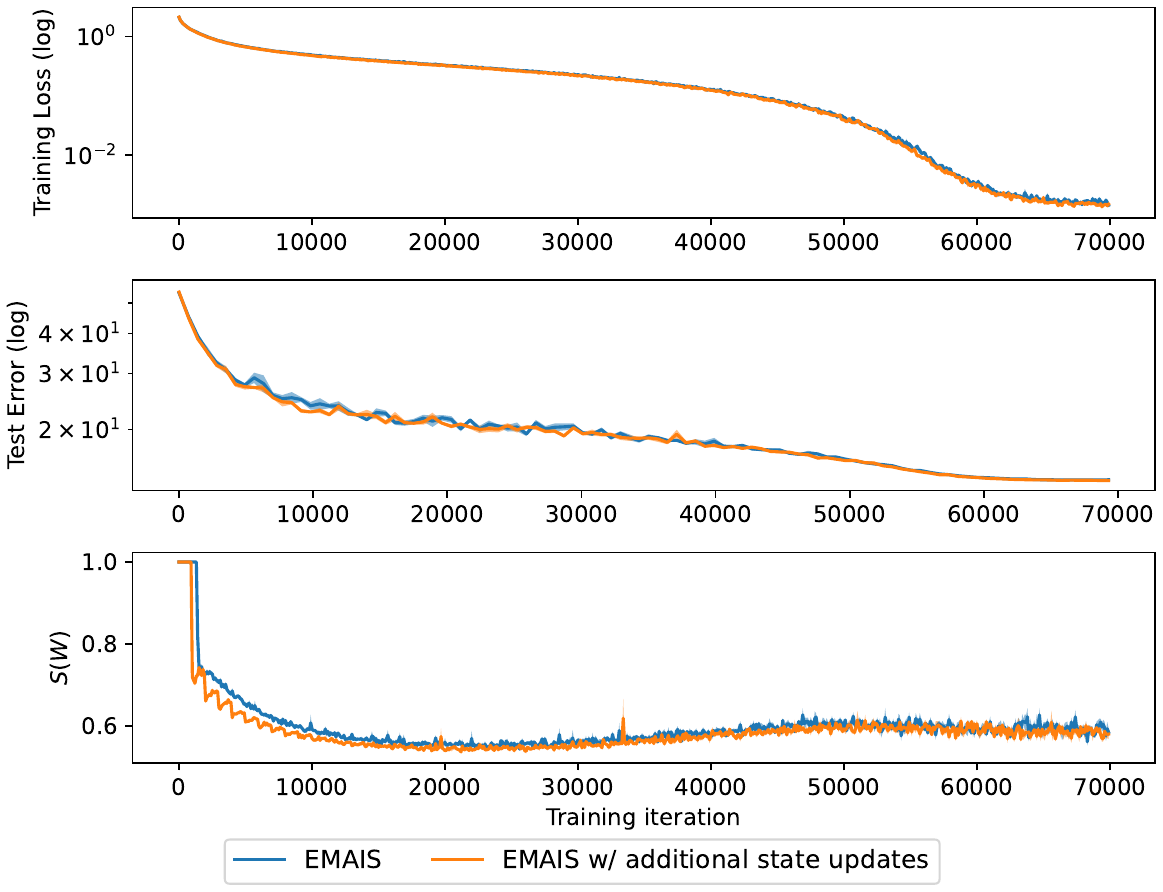}}
    \caption{%
        Performance of EMAIS with periodic additional updates of internal state variables on FMNIST and CINIC-10.%
    }
    \label{fig:additional_state_updates}
\end{figure}

\section{Conclusion}
We proposed a method for estimating the variance reduction of loss gradient estimators in DNN training using importance sampling, compared to uniform sampling. Since the proposed method requires only minibatches generated through importance sampling, it enables the online estimation of the variance reduction rate during DNN training.
By leveraging the proposed method, we also proposed an EMS to enable automatic learning rate adjustment. We developed an absolute metric to evaluate the efficiency of importance sampling and designed an algorithm to estimate importance weight using moving statistics.
Through numerical experiments, we demonstrated that the proposed method consistently achieved higher accuracy than uniform sampling, while maintaining comparable computational time. We also showed that the proposed method outperformed related importance-sampling methods, highlighting its effectiveness and efficiency.

As a potential extension, it would be worth exploring the application of our method to distributed learning.
Alain et al. \cite{alain2016variance} proposed a distributed importance sampling approach in which, at every training iteration, the gradient norm for each training sample is computed across workers, aggregated on a master node, and then used for importance sampling and model updates.
In contrast, our method estimates importance weights based on moving statistics of gradient norms, which may reduce the need for communication at every iteration. Furthermore, under a fixed data-sharding regime, each node may only need to communicate the scalar sum of its local weight vector~$W$ to reconstruct the global sampling distribution and perform importance sampling entirely on-node, potentially reducing communication costs even further.

\appendix

\section{Proofs and formula derivations} \label{apdx:derivation}
\subsection{Proof of \cref{prop:exp_equiv}} \label{apdx:exp_equiv}
For any function~$g(i)$ defined over~$i \in \llbracket M \rrbracket$, the following holds:
\begin{align}
    \mathbb{E}_{i \sim p_\mathrm{unif}(i)}\left[g(i)\right] & = \sum_{i=1}^M p_\mathrm{unif}(i) g(i) \label{eq:exp_equiv}                                  \\
                                                            & = \sum_{i=1}^M p_\mathrm{is}(i; W) \frac{p_\mathrm{unif}(i)}{p_\mathrm{is}(i;W)} g(i) \notag \\
                                                            & = \sum_{i=1}^M p_\mathrm{is}(i; W) r\left(i; W\right) g(i) \notag                            \\
                                                            & = \mathbb{E}_{i \sim p_\mathrm{is}(i;W)}\left[r(i;W) g(i)\right].  \notag
\end{align}
Therefore, by setting~$g(i)=\nabla_\theta\mathcal{L}(\theta; i)$, we obtain
\begin{align*}
    \mathbb{E}_{i \sim p_\mathrm{unif}(i)}\left[\nabla_\theta\mathcal{L}(\theta; i)\right] = \mathbb{E}_{i \sim p_\mathrm{is}(i;W)}\left[r(i;W) \nabla_\theta\mathcal{L}(\theta; i)\right],
\end{align*}
which indicates that $\mathbb{E}_\mathrm{unif}\left[\nabla_\theta\mathcal{L}(\theta)\right] = \mathbb{E}_{\mathrm{is}(W)}\left[\nabla_\theta\mathcal{L}(\theta)\right]$.

\subsection{Proof of \cref{prop:tr_cov_est}} \label{apdx:proof_prop_tr_cov_est}
From \cref{apdx:exp_equiv}, it holds that
\begin{align}
    \mu =  \mathbb{E}_{i \sim p_\mathrm{is}(i;W)}\left[r(i;W) \nabla_\theta \mathcal{L}(\theta;i)\right] = \mathbb{E}_{i \sim p_\mathrm{unif}(i)}\left[\nabla_\theta \mathcal{L}(\theta;i)\right]. \notag
\end{align}
We denote the expected value of the loss gradient with respect to the $k$-th parameter~$\theta_k$ by~$\mu_k$ as
\begin{align}
    \mu_k =  \mathbb{E}_{i \sim p_\mathrm{is}(i;W)}\left[r(i;W) \nabla_{\theta_k} \mathcal{L}(\theta;i)\right] = \mathbb{E}_{i \sim p_\mathrm{unif}(i)}\left[\nabla_{\theta_k} \mathcal{L}(\theta;i)\right] \notag
\end{align}
Then~$\tr\left(\mathbb{V}_{\mathrm{is}(W)}\left[\nabla_\theta \mathcal{L}(\theta)\right]\right)$ can be rewritten as
\begin{align}
     & \tr\left(\mathbb{V}_{\mathrm{is}(W)}\left[\nabla_\theta \mathcal{L}(\theta)\right]\right)                                                                                                                                                                  \notag \\
     & =\sum_{k=1}^K \mathbb{V}_{\mathrm{is}(W)}\left[\nabla_{\theta_k} \mathcal{L}(\theta)\right] \notag                                                                                                                                                                \\
     & = \sum_{k=1}^K \mathbb{E}_{i \sim p_\mathrm{is}(i;W)} \left[\left(r(i;W) \nabla_{\theta_k}\mathcal{L}(\theta;i) - \mu_{k}\right)^2\right] \notag                                                                                                                  \\
     & = \mathbb{E}_{i \sim p_\mathrm{is}(i;W)} \left[\sum_{k=1}^K \left(r(i;W) \nabla_{\theta_k}\mathcal{L}(\theta;i) - \mu_{k}\right)^2\right] \notag                                                                                                                  \\
     & = \mathbb{E}_{i \sim p_\mathrm{is}(i;W)} \left[\left\|r(i;W) \nabla_{\theta}\mathcal{L}(\theta;i) - \mu\right\|^2\right] \notag                                                                                                                                   \\
     & = \mathbb{E}_{i \sim p_\mathrm{is}(i;W)} \left[\left\|r(i;W) \nabla_{\theta}\mathcal{L}(\theta;i)\right\|^2\right] - 2\mu^\top \mathbb{E}_{i \sim p_\mathrm{is}(i;W)} \left[r(i;W)\nabla_{\theta}\mathcal{L}(\theta;i)\right]  + \left\|\mu\right\|^2 \notag      \\
     & = \mathbb{E}_{i \sim p_\mathrm{is}(i;W)} \left[\left\|r(i;W) \nabla_{\theta}\mathcal{L}(\theta;i)\right\|^2\right] - \left\|\mu\right\|^2. \notag
\end{align}
Note that this result, although derived through a different procedure, is consistent with the result of Alain et al. \cite{alain2016variance}.
Moreover,~$\tr\left(\mathbb{V}_{\mathrm{unif}}\left[\nabla_\theta \mathcal{L}(\theta)\right]\right)$ can be rewritten as
\begin{align}
    \tr\left(\mathbb{V}_\mathrm{unif}\left[\nabla_\theta \mathcal{L}(\theta)\right]\right)
     & = \sum_{k=1}^K \mathbb{V}_\mathrm{unif}\left[\nabla_{\theta_k} \mathcal{L}(\theta)\right] \notag                                                \\
     & = \sum_{k=1}^K \mathbb{E}_{i \sim p_\mathrm{unif}(i)} \left[\left(\nabla_{\theta_k}\mathcal{L}(\theta;i) - \mu_{k}\right)^2\right] \notag       \\
     & = \mathbb{E}_{i \sim p_\mathrm{unif}(i)} \left[\left\|\nabla_{\theta}\mathcal{L}(\theta;i) - \mu\right\|^2\right] \notag                        \\
     & = \mathbb{E}_{i \sim p_\mathrm{unif}(i)} \left[\left\|\nabla_{\theta}\mathcal{L}(\theta;i)\right\|^2\right] - \left\|\mu\right\|^2 \notag       \\
     & = \mathbb{E}_{i \sim p_\mathrm{is}(i;W)}\left[r(i;W) \left\|\nabla_\theta \mathcal{L}(\theta;i)\right\|^2\right] - \left\|\mu\right\|^2, \notag
\end{align}
where the final transformation follows from \cref{eq:exp_equiv} with~$g(i) = \left\|\nabla_{\theta}\mathcal{L}(\theta;i)\right\|^2$.

On the basis of \cref{eq:tr_V_is_ast},~$\tr\left(\mathbb{V}_{\mathrm{is}(W^\ast)}\left[\nabla_\theta \mathcal{L}(\theta)\right]\right)$ can be rewritten as
\begin{align}
    \tr\left(\mathbb{V}_{\mathrm{is}(W^\ast)}\left[\nabla_\theta \mathcal{L}(\theta)\right]\right)
     & = \left(\mathbb{E}_{i \sim p_\mathrm{unif}(i)}\left[\left\|\nabla_\theta \mathcal{L}(\theta;i)\right\|\right]\right)^2 - \left\|\mathbb{E}_{i \sim p_\mathrm{unif}(i)}\left[\nabla_\theta \mathcal{L}(\theta;i)\right]\right\|^2 \notag \\
     & = \left(\mathbb{E}_{i \sim p_\mathrm{is}(i;W)}\left[r(i;W) \left\|\nabla_\theta \mathcal{L}(\theta;i)\right\|\right]\right)^2 - \left\|\mu\right\|^2, \notag
\end{align}
where the transformation follows from \cref{eq:exp_equiv} with~$g(i) = \left\|\nabla_{\theta}\mathcal{L}(\theta;i)\right\|$.

\subsection{Proof of \cref{prop:ems}} \label{apdx:proof_prop_ems}
In training with a minibatch of size~$N'$ under uniform sampling, the following statistic is computed for the loss gradient estimation:
\begin{align}
    \nabla_\theta \bar{\mathcal{L}}_\mathrm{unif}^{N'} \left(\theta\right) := \frac{1}{N'} \sum_{k=1}^{N'} \nabla_\theta \mathcal{L}\left(\theta; i'_k\right) \text{ with } i'_k \sim p_\mathrm{unif}(i). \notag 
\end{align}
Assuming that~$i'_k$ are i.i.d., the mean and covariance matrix of~$\nabla_\theta \bar{\mathcal{L}}_\mathrm{unif}^{N'} \left(\theta\right)$ are given, respectively, by\footnote{See \cref{apdx:sample_mean} for the detailed derivation.}
\begin{align*}
    \mathbb{E}\left[\nabla_\theta \bar{\mathcal{L}}_\mathrm{unif}^{N'} \left(\theta\right)\right] & = \mathbb{E}_\mathrm{unif}\left[\nabla_\theta\mathcal{L}(\theta)\right],             \\
    \mathbb{V}\left[\nabla_\theta \bar{\mathcal{L}}_\mathrm{unif}^{N'} \left(\theta\right)\right] & = \frac{1}{N'}\mathbb{V}_\mathrm{unif}\left[\nabla_\theta\mathcal{L}(\theta)\right].
\end{align*}
Note that the expectation and variance on the left sides of the above equations are taken over the minibatch distribution.

Similarly, in training with a minibatch of size~$N$ under importance sampling with~$W$, the following statistic is computed for the loss gradient estimation:
\begin{align}
    \nabla_\theta \bar{\mathcal{L}}_{\mathrm{is}(W)}^N \left(\theta\right) := \frac{1}{N} \sum_{k=1}^N r\left(i_k;W\right) \nabla_\theta \mathcal{L}\left(\theta; i_k\right) \text{ with } i_k \sim p_\mathrm{is}(i;W). \notag
\end{align}
Under the i.i.d. assumption of~$i_k$, the mean and covariance matrix of $\nabla_\theta \bar{\mathcal{L}}_{\mathrm{is}(W)}^N \left(\theta\right)$ are given, respectively, by
\begin{align*}
    \mathbb{E}\left[\nabla_\theta \bar{\mathcal{L}}_{\mathrm{is}(W)}^N \left(\theta\right)\right] & = \mathbb{E}_{\mathrm{is}(W)}\left[\nabla_\theta\mathcal{L}(\theta)\right],            \\
    \mathbb{V}\left[\nabla_\theta \bar{\mathcal{L}}_{\mathrm{is}(W)}^N \left(\theta\right)\right] & = \frac{1}{N}\mathbb{V}_{\mathrm{is}(W)}\left[\nabla_\theta\mathcal{L}(\theta)\right].
\end{align*}
Then, from \cref{eq:grad_mean_eq}, it follows that
\begin{align*}
    \mathbb{E}\left[\nabla_\theta \bar{\mathcal{L}}_\mathrm{unif}^{N'} \left(\theta\right)\right] = \mathbb{E}\left[\nabla_\theta \bar{\mathcal{L}}_{\mathrm{is}(W)}^N \left(\theta\right)\right].
\end{align*}
Moreover, if~$N'$ is assumed to be equal to~$N_\mathrm{ems}$ in \cref{eq:n_ems}, it holds that
\begin{align*}
    \tr\left(\mathbb{V}\left[\nabla_\theta \bar{\mathcal{L}}_\mathrm{unif}^{N'} \left(\theta\right)\right]\right) & = \tr\left(\mathbb{V}\left[\nabla_\theta \bar{\mathcal{L}}_\mathrm{unif}^{N_\mathrm{ems}} \left(\theta\right)\right]\right)                                                                                                                                                      \\
                                                                                                                  & = \tr\left(\frac{\tr\left(\mathbb{V}_{\mathrm{is}(W)}\left[\nabla_\theta\mathcal{L}(\theta)\right]\right)}{\tr\left(\mathbb{V}_\mathrm{unif}\left[\nabla_\theta\mathcal{L}(\theta)\right]\right) N} \mathbb{V}_\mathrm{unif}\left[\nabla_\theta\mathcal{L}(\theta)\right]\right) \\
                                                                                                                  & = \tr\left(\frac{1}{N}\mathbb{V}_{\mathrm{is}(W)}\left[\nabla_\theta\mathcal{L}(\theta)\right]\right)
    = \tr\left(\mathbb{V}\left[\nabla_\theta \bar{\mathcal{L}}_{\mathrm{is}(W)}^N \left(\theta\right)\right]\right).
\end{align*}

\subsection{Mean and variance of sample mean} \label{apdx:sample_mean}
Consider a random variable~$x \in \mathbb{R}^n$ following a distribution~$p_x$, with mean~$\mu$ and covariance matrix~$\Sigma$.
Given~$L$ i.i.d. samples of~$x$, the sample mean of~$x$ is defined as
\begin{align}
    \bar{x} := \frac{1}{L}\sum_{i=1}^L x_i \quad \text{with} \quad x_i \sim p_x. \notag
\end{align}
The mean and covariance matrix of the sample mean~$\bar{x}$ are then given as
\begin{align}
    \mathbb{E}\left[\bar{x}\right] & = \mathbb{E}\left[\frac{1}{L}\sum_{i=1}^L x_i\right] = \frac{1}{L}\sum_{i=1}^L \mathbb{E}\left[x_i\right] = \frac{1}{L} L \mu = \mu, \notag                                                                                                         \\
    \mathbb{V}\left[\bar{x}\right] & = \mathbb{V}\left[\frac{1}{L}\sum_{i=1}^L x_i\right] = \frac{1}{L^2}\sum_{i=1}^L \mathbb{V}\left[x_i\right] + \frac{1}{L^2} \sum_{i=1}^L \sum_{j=1, j\neq i}^L \text{Cov}\left[x_i, x_j\right] = \frac{1}{L^2} L \Sigma = \frac{1}{L}\Sigma, \notag
\end{align}
where~$\text{Cov}$ denotes the covariance, and the assumption of i.i.d. samples indicates~$\text{Cov}\left[x_i, x_j\right] = 0$ for~$i \neq j$.

\section{Pseudo codes for variance estimation} \label{apdx:pseudo_code}
Assuming the use of Python~\cite{python3} and PyTorch~\cite{pytorch}, we present the pseudocode for variance estimation based on \cref{prop:tr_cov_est} in \cref{fig:pseudo_code}.
As shown in the pseudocode, by providing~$r(i;W)$ and the loss gradients for each data point in a minibatch generated in accordance with~$p_\mathrm{is}(i;W)$, the traces of the covariance matrices described in \cref{prop:tr_cov_est} can be efficiently estimated with only a few lines of code.
\begin{figure}[tbhp]
    \begin{lstlisting}[language=Python]
  def compute_tr_vars(r, per_sample_grad):
      """ Estimate the trace of variance-covariance matrices.
      Mini-batch is assumed to be generated following p_is.

      Args:
          r ([batch_size]): p_uniform / p_is
          per_sample_grad ([batch_size, n_params]): gradients for each sample
      """
      # Compute mu
      mu = (r[:, None] * per_sample_grad).mean(dim=0)

      # Variance of grad with current importance sampling
      tr_var_is = (r[:, None] * per_sample_grad).norm(p=2, dim=1).pow(2).mean()
      tr_var_is -= mu.norm(p=2).pow(2)

      # Variance of grad with uniform sampling
      tr_var_uniform = (r * per_sample_grad.norm(p=2, dim=1).pow(2)).mean()
      tr_var_uniform -= mu.norm(p=2).pow(2)

      # Variance of grad with optimal importance sampling (lower-bound)
      sample_grad_norm = per_sample_grad.norm(p=2, dim=1)
      tr_var_optimal_is = (r * sample_grad_norm).mean().pow(2)
      tr_var_optimal_is -= mu.norm(p=2).pow(2)

      return tr_var_is, tr_var_uniform, tr_var_optimal_is
  \end{lstlisting}
    \caption{Pseudo code for variance estimation based on \cref{prop:tr_cov_est}}
    \label{fig:pseudo_code}
\end{figure}

\section{Additional experimental results}

\subsection{Efficiency score~$\mathcal{S}(W)$ and estimated trace of gradient variances} \label{apdx:score_and_trvars}
\Cref{fig:score_and_trvars} shows the efficiency score~$\mathcal{S}(W)$ and the corresponding estimated trace of gradient variances used in its computation, for CINIC-10 and FMNIST training with EMAIS using Linear-$\tau$.
Note that the trace of the gradient variance is approximated using logit gradients, as discussed in \cref{sec:approx_by_logit}.
\begin{figure}[tbhp]
    \centering
    \subfloat[CINIC-10]{\includegraphics[width=0.5\linewidth,clip]{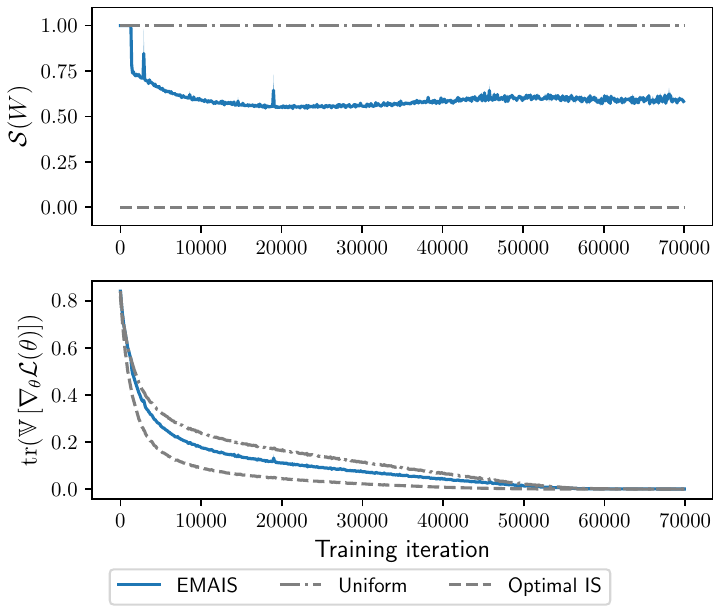}}
    \subfloat[FMNIST]{\includegraphics[width=0.5\linewidth,clip]{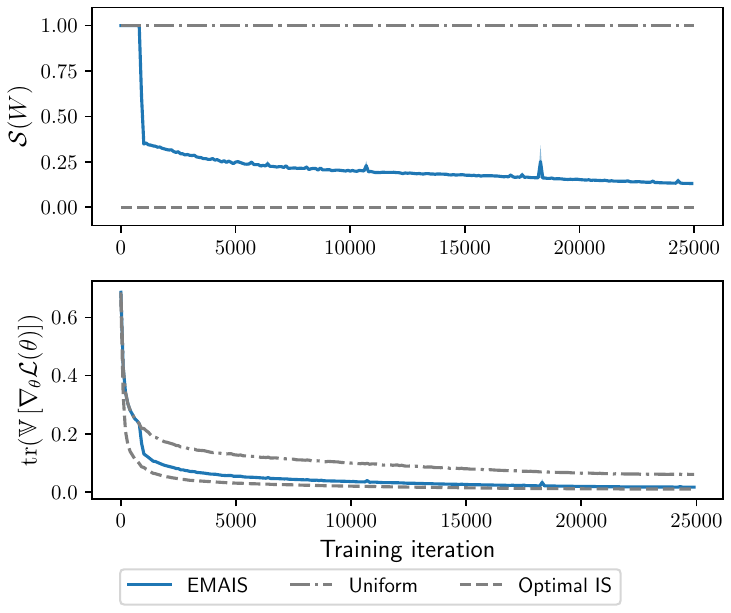}}
    \caption{Efficiency score~$\mathcal{S}(W)$ and corresponding estimated trace of gradient variances for CINIC-10 and FMNIST.}
    \label{fig:score_and_trvars}
\end{figure}

\subsection{Analysis of Linear-$\tau$ with varying total iterations} \label{apdx:analysis_linear_tau}
We conducted training for CINIC-10 using Linear-$\tau$ while varying the total number of training iterations as 35000, 70000, 105000, or 140000. The evaluated~$\mathcal{S}(W)$ during training is presented in \cref{fig:cinic10_35000-140000_IS_score}.
\begin{figure}[tbhp]
    \centering
    \includegraphics[width=0.7\linewidth,clip]{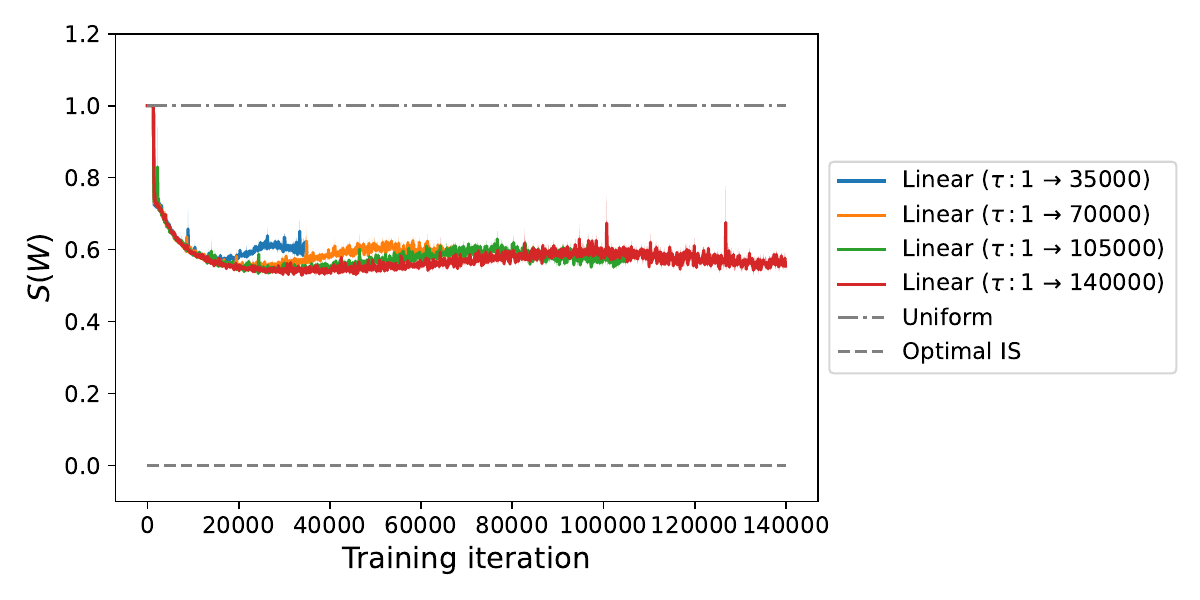}
    \caption{Transition of~$\mathcal{S}(W)$ during training with importance sampling under Linear-$\tau$ strategy for varying total training iterations on CINIC-10.}
    \label{fig:cinic10_35000-140000_IS_score}
\end{figure}

\subsection{Training time} \label{apdx:training_time}
The training times for DNN models on FMNIST, CINIC-10, and ChestX-ray14 are summarized in \cref{tab:res_time_fmnist}, \cref{tab:res_time_cinic10}, and \cref{tab:res_time_nih}, respectively.
Each table reports the mean training time in seconds, with the numbers in parentheses representing the standard deviations.

\begin{table}[tbhp]
    \centering
    \tabcolsep = 5pt
    \caption{Training time (s) for FMNIST}
    \label{tab:res_time_fmnist}
    \scalebox{0.67}{
        \begin{tabular}{r|rrrrrrrr}
            \toprule
            Total-iters & \multicolumn{1}{c}{SGD-scan} & \multicolumn{1}{c}{SGD-unif} & \multicolumn{1}{c}{RAIS} & \multicolumn{1}{c}{Presampling-IS} & \multicolumn{1}{c}{Confidence} & \multicolumn{1}{c}{ConfVar} & \multicolumn{1}{c}{Self-paced} & \multicolumn{1}{c}{EMAIS} \\
            \midrule
            6250        & $53.1 (\pm 0.2)$             & $53.5 (\pm 0.3)$             & $68.6 (\pm 0.7)$         & $321.8 (\pm 1.6)$                  & $40.4 (\pm 0.9)$               & $58.1 (\pm 1.5)$            & $53.4 (\pm 0.4)$               & $48.0 (\pm 1.2)$          \\
            12500       & $108.0 (\pm 0.8)$            & $107.8 (\pm 0.9)$            & $135.3 (\pm 6.4)$        & $656.0 (\pm 1.5)$                  & $76.6 (\pm 1.2)$               & $114.3 (\pm 3.8)$           & $108.2 (\pm 0.4)$              & $90.1 (\pm 2.3)$          \\
            25000       & $216.2 (\pm 2.4)$            & $216.9 (\pm 2.2)$            & $262.3 (\pm 14.5)$       & $1316.2 (\pm 12.2)$                & $151.2 (\pm 2.4)$              & $223.7 (\pm 4.2)$           & $219.3 (\pm 1.1)$              & $176.9 (\pm 3.6)$         \\
            \bottomrule
        \end{tabular}
    }
\end{table}

\begin{table}[tbhp]
    \centering
    \tabcolsep = 5pt
    \caption{Training time (s) for CINIC-10}
    \label{tab:res_time_cinic10}
    \scalebox{0.58}{
        \begin{tabular}{lr|rrrrrrrr}
            \toprule
            WD   & Total-iters & \multicolumn{1}{c}{SGD-scan} & \multicolumn{1}{c}{SGD-unif} & \multicolumn{1}{c}{RAIS} & \multicolumn{1}{c}{Presampling-IS} & \multicolumn{1}{c}{Confidence} & \multicolumn{1}{c}{ConfVar} & \multicolumn{1}{c}{Self-paced} & \multicolumn{1}{c}{EMAIS} \\
            \midrule
            1e-4 & 17500       & $1169.3 (\pm 76.4)$          & $1111.3 (\pm 13.7)$          & $1188.2 (\pm 65.4)$      & $3496.6 (\pm 139.2)$               & $1116.2 (\pm 30.7)$            & $1167.5 (\pm 29.6)$         & $1137.5 (\pm 33.2)$            & $1153.6 (\pm 47.7)$       \\
                 & 35000       & $2329.3 (\pm 167.3)$         & $2322.7 (\pm 151.5)$         & $2327.2 (\pm 50.1)$      & $7140.6 (\pm 397.3)$               & $2250.7 (\pm 68.3)$            & $2342.5 (\pm 66.6)$         & $2272.0 (\pm 39.9)$            & $2280.6 (\pm 87.5)$       \\
                 & 70000       & $4693.2 (\pm 168.4)$         & $4498.9 (\pm 27.3)$          & $4552.2 (\pm 38.5)$      & $13596.8 (\pm 287.9)$              & $4416.5 (\pm 115.1)$           & $4591.8 (\pm 49.6)$         & $4601.4 (\pm 91.5)$            & $4546.3 (\pm 88.0)$       \\
                 & 140000      & $9202.0 (\pm 303.0)$         & $9068.9 (\pm 165.7)$         & $9071.8 (\pm 94.4)$      & $28065.0 (\pm 802.4)$              & $8838.7 (\pm 275.4)$           & $9318.1 (\pm 156.0)$        & $9123.9 (\pm 228.8)$           & $9065.2 (\pm 205.7)$      \\
            \addlinespace
            1e-5 & 17500       & $1164.0 (\pm 54.2)$          & $1174.8 (\pm 63.4)$          & $1183.8 (\pm 61.2)$      & $3443.2 (\pm 120.3)$               & $1114.6 (\pm 39.3)$            & $1185.3 (\pm 27.3)$         & $1112.9 (\pm 21.3)$            & $1237.2 (\pm 29.2)$       \\
                 & 35000       & $2250.6 (\pm 50.7)$          & $2415.6 (\pm 163.2)$         & $2307.8 (\pm 43.3)$      & $6885.3 (\pm 211.3)$               & $2219.1 (\pm 63.6)$            & $2364.7 (\pm 58.1)$         & $2254.2 (\pm 36.4)$            & $2473.4 (\pm 37.8)$       \\
                 & 70000       & $4697.9 (\pm 170.7)$         & $4605.3 (\pm 304.8)$         & $4631.1 (\pm 79.8)$      & $13595.9 (\pm 285.0)$              & $4405.0 (\pm 121.9)$           & $4724.7 (\pm 151.7)$        & $4617.2 (\pm 52.1)$            & $4620.0 (\pm 223.4)$      \\
                 & 140000      & $9124.9 (\pm 361.5)$         & $9739.0 (\pm 712.0)$         & $9328.6 (\pm 195.0)$     & $27772.9 (\pm 306.7)$              & $8867.0 (\pm 338.4)$           & $9433.2 (\pm 268.2)$        & $9087.3 (\pm 211.2)$           & $9195.6 (\pm 357.6)$      \\
            \bottomrule
        \end{tabular}
    }
\end{table}

\begin{table}[tbhp]
    \centering
    \tabcolsep = 5pt
    \caption{Training time (s) for ChestX-ray14}
    \label{tab:res_time_nih}
    \scalebox{0.7}{
        \begin{tabular}{r|rrrrrr}
            \toprule
            Total-iters & \multicolumn{1}{c}{SGD-scan} & \multicolumn{1}{c}{SGD-unif} & \multicolumn{1}{c}{RAIS} & \multicolumn{1}{c}{Presampling-IS} & \multicolumn{1}{c}{Self-paced} & \multicolumn{1}{c}{EMAIS} \\
            \midrule
            17500       & $7768.5 (\pm 14.2)$          & $7758.2 (\pm 2.0)$           & $7866.1 (\pm 43.1)$      & $23824.4 (\pm 25.7)$               & $7759.7 (\pm 3.1)$             & $7773.8 (\pm 3.1)$        \\
            35000       & $15580.2 (\pm 5.1)$          & $15587.5 (\pm 2.4)$          & $15724.8 (\pm 62.4)$     & $47792.9 (\pm 42.5)$               & $15583.7 (\pm 6.7)$            & $15608.3 (\pm 7.2)$       \\
            70000       & $31223.1 (\pm 7.7)$          & $31223.7 (\pm 13.7)$         & $31487.9 (\pm 115.0)$    & $95637.2 (\pm 92.0)$               & $31222.2 (\pm 13.2)$           & $31268.2 (\pm 10.2)$      \\
            \bottomrule
        \end{tabular}
    }
\end{table}

\bibliographystyle{plainnat}
\bibliography{references}

\end{document}